\newtheorem{theorem}{Theorem}[section]
\newtheorem{lemma}[theorem]{Lemma}
\newtheorem{corollary}[theorem]{Corollary}
\newtheorem{proposition}[theorem]{Proposition}
\newcommand{\I}{\mathbf{I}}
\newcommand{\ball}[2]{B_{#1}\!\left(#2\right)}
\newcommand{\opnorm}[1]{\left\lVert #1 \right\rVert_{\mathrm{op}}}
\newcommand{\E}{\mathbb{E}}
\newcommand{\Pbb}{\mathbb{P}}
\newcommand{\R}{\mathbb{R}}
\crefname{assumption}{Assumption}{Assumptions}
\Crefname{assumption}{Assumption}{Assumptions}
\crefname{theorem}{Theorem}{Theorems}
\Crefname{theorem}{Theorem}{Theorems}
\crefname{lemma}{Lemma}{Lemmas}
\Crefname{lemma}{Lemma}{Lemmas}
\crefname{proposition}{Proposition}{Propositions}
\Crefname{proposition}{Proposition}{Propositions}
\crefname{corollary}{Corollary}{Corollaries}
\Crefname{corollary}{Corollary}{Corollaries}
\crefname{algorithm}{Algorithm}{Algorithms}
\Crefname{algorithm}{Algorithm}{Algorithms}
\crefname{equation}{Eq.}{Eqs.}
\Crefname{equation}{Equation}{Equations}
\newcommand{\norm}[1]{\left\lVert #1 \right\rVert}
\newcommand{\ip}[2]{\left\langle #1,\,#2 \right\rangle}
\title{Spectral Thresholds for Identifiability and Stability:\\
Finite-Sample Phase Transitions in High-Dimensional Learning}
\author{William Hao-Cheng Huang\\
Taiwan Semiconductor Manufacturing Company (TSMC)\\
\texttt{williamhuang0709@gmail.com}}
\date{\small Preprint. A conference version is under review.}
\begin{document}

\maketitle

\begin{abstract}
In high-dimensional learning, models remain stable until they collapse abruptly once the sample size falls below a critical level. This instability is not algorithm-specific but a geometric mechanism: when the weakest Fisher eigendirection falls beneath sample-level fluctuations, identifiability fails. Our Fisher Threshold Theorem formalizes this by proving that stability requires the minimal Fisher eigenvalue to exceed an explicit $O(\sqrt{d/n})$ bound. Unlike prior asymptotic or model-specific criteria, this threshold is finite-sample and necessary, marking a sharp phase transition between reliable concentration and inevitable failure. To make the principle constructive, we introduce the Fisher floor, a verifiable spectral regularization robust to smoothing and preconditioning. Synthetic experiments on Gaussian mixtures and logistic models confirm the predicted transition, consistent with $d/n$ scaling. Statistically, the threshold sharpens classical eigenvalue conditions into a non-asymptotic law; learning-theoretically, it defines a spectral sample-complexity frontier, bridging theory with diagnostics for robust high-dimensional inference.
\end{abstract}

\noindent\textbf{Keywords:} Fisher information, spectral threshold, phase transition,
statistical identifiability, stability of learning algorithms, information-theoretic bounds

\section{Introduction}

In modern high-dimensional learning, models often appear reliable up to a point, only to collapse abruptly once the sample size falls below a critical level. This is evident in double descent in overparameterized neural networks \citep{belkin2019reconciling} and in high-dimensional regression, where estimation becomes unreliable once $n$ is on the order of $d$. Existing frameworks---such as Fisher consistency, restricted eigenvalue conditions, or information-theoretic bounds---offer only asymptotic guarantees or loose sufficient criteria, leaving these sharp transitions unexplained.

\medskip
\noindent\textbf{Our goal.} We seek a finite-sample law that separates stability from inevitable failure, independent of algorithmic specifics. Such a law should act as a sharp identifiability criterion, clarifying when inference is possible and when estimation must collapse.

\medskip
\noindent\textbf{Main result.} Our central theorem establishes a Fisher spectral threshold: identifiability requires the minimal eigenvalue of the empirical Fisher information to exceed an explicit $O(\sqrt{d/n})$ bound. Unlike asymptotic Fisher consistency \citep{lecam1970asymptotic}, this threshold is both necessary and finite-sample: above it, parameters concentrate reliably; below it, estimation fails due to Fisher spectrum degeneracy, as weak eigendirections become indistinguishable under finite-sample noise. This refines prior phase-transition analyses in spiked models \citep{baik2005phase} and double descent \citep{belkin2019reconciling}, yielding a sharp non-asymptotic boundary for when learning remains possible. Detailed proofs, including the PL inequality, are in Section~4, building on assumptions in Section~3.

\medskip
\noindent\textbf{Key advances.} Our approach advances the field by: first, proving a \emph{Fisher Threshold Theorem} that establishes a necessary finite-sample spectral law for identifiability; second, introducing a \emph{Constructive Fisher Floor}, a verifiable regularization robust to smoothing and preconditioning; and third, verifying the threshold in synthetic experiments on Gaussian mixtures and logistic models, consistent with the predicted $d/n$ scaling and visualized in Figure~1.

\medskip
\noindent\textbf{Implications.} Statistically, the Fisher threshold sharpens regression eigenvalue conditions into a finite-sample law; learning-theoretically, it defines a spectral sample-complexity frontier.


\section{Related Work}

\paragraph{Statistical Identifiability.}
Classical asymptotic statistics links identifiability to Fisher information,
via local asymptotic normality and the Cramér--Rao inequality
\citep{lecam1970asymptotic,vandervaart1998asymptotic}.
In high-dimensional settings, conditions such as restricted eigenvalue
and restricted strong convexity \citep{bickel2009simultaneous,negahban2012unified}
yield sufficient guarantees, while modern analyses reveal sharp feasibility
boundaries for specific MLEs \citep{sur2019modern}.
However, these results are either asymptotic, provide only loose sufficient criteria,
or remain tied to particular model classes, leaving open whether there exists
a verifiable \emph{necessary} law that dictates when identifiability must collapse.
Our contribution addresses this gap by providing a \emph{general necessary finite-sample spectral law},
reframing Fisher information as a concrete non-asymptotic criterion for identifiability.

\paragraph{Spectral Phase Transitions.}
Phase-transition phenomena are central in high-dimensional inference:
the BBP transition in spiked models \citep{baik2005phase},
sparse PCA \citep{lesieur2015phase}, and multi-index models \citep{defilippis2025optimal}.
In machine learning, related instabilities appear through the ``double descent'' phenomenon \citep{belkin2019reconciling}
and analyses of Fisher information spectra in deep networks
\citep{pennington2018spectrum,karakida2019universal}.
These works demonstrate that spectral degeneracies often coincide with instability,
but their conclusions remain either tied to specific models or descriptive in nature,
and therefore stop short of providing general, verifiable necessary thresholds.
Our results sharpen these insights by establishing an explicit spectral boundary
that marks the onset of stability failure, connecting Fisher spectrum degeneracy
directly to finite-sample identifiability as a necessary law.

\paragraph{Algorithmic Stability and Generalization.}
Within learning theory, stability and generalization have been studied through
uniform stability \citep{bousquet2002stability,hardt2016train},
PAC-Bayesian analysis \citep{mcallester1999some,dziugaite2017computing},
and information-theoretic approaches \citep{xu2017information}.
These frameworks largely provide \emph{sufficient} guarantees, ensuring generalization
when stability holds, but do not characterize when stability must necessarily fail.
Lower bounds, such as those of \citet{feldman2018generalization}, highlight the inherent
limits of uniform stability, but remain tied to specific algorithmic assumptions.
Our contribution complements this line by establishing a \emph{spectral lower bound on stability}:
an algorithm-independent criterion that becomes binding once Fisher curvature
falls below sample-level fluctuations, thereby marking a fundamental and verifiable
impossibility frontier for learning algorithms.

\paragraph{Positioning.}
In summary, prior work has clarified asymptotic identifiability, demonstrated
empirical spectral instabilities, and established sufficient stability criteria.
Yet these strands have remained fragmented: asymptotic laws ignore finite-sample
instabilities, empirical spectra lack necessity, and stability bounds emphasize
sufficiency. Our Fisher threshold unifies and refines these directions by
redefining Fisher information as a finite-sample phase-transition law and
providing an algorithm-independent lower bound on stability.
This bridge between statistical identifiability and learning-theoretic stability
sets the stage for our formal development in the next section.


\section{Preliminaries}

We begin by introducing the structural assumptions that form the analytical backbone of our results. 
Rather than treating them as merely technical conditions, we emphasize their role as a 
\emph{bridge} between optimization geometry and statistical identifiability: smoothness 
translates optimization arguments into quantitative inequalities, concentration lifts 
population curvature to the sample level, and KL control quantifies the statistical indistinguishability of local alternatives. 
Taken together, these assumptions—and their immediate consequences—will reappear verbatim 
across theorems and experiments, serving as the common ``calculus rules'' of our analysis.

\paragraph{Setup and Notation.}
We observe $n$ i.i.d.\ samples $(X_i,Y_i)_{i=1}^n$ from a parametric model 
$\{P_\theta: \theta \in \Theta \subset \mathbb{R}^d\}$. 
The per-sample loss is denoted $\ell(\theta;X,Y)$ and the empirical risk is 
$L(\theta) = \tfrac{1}{n} \sum_{i=1}^n \ell(\theta;X_i,Y_i)$. 
Fix a reference parameter $\theta^\ast$ (typically the population minimizer). 
The population Fisher information at $\theta^\ast$ is
\[
\Gamma = \mathbb{E}[\,s(\theta^\ast)s(\theta^\ast)^\top\,], 
\qquad s(\theta) := \nabla_\theta \ell(\theta;X,Y).
\]
We write its eigenvalues in decreasing order $\lambda_1 \ge \cdots \ge \lambda_d =: \lambda_{\min}$. 
The operator norm is denoted $\|\cdot\|_{\mathrm{op}}$. 
For $r>0$, we let $B_r(\theta^\ast) = \{\theta : \|\theta - \theta^\ast\|\le r\}$.

\paragraph{Assumptions.}
Throughout we impose the following local conditions around $\theta^\ast$:

\begin{itemize}[leftmargin=*]
\item[(A1)] \textbf{Local smoothness.}  
There exist $r>0$ and $L_{\mathrm{sm}}>0$ such that $\nabla L$ is $L_{\mathrm{sm}}$-Lipschitz on $B_r(\theta^\ast)$.  
\emph{Interpretation:} the loss surface has no abrupt curvature spikes, ensuring Taylor expansions 
and descent arguments apply uniformly.  

\item[(A2)] \textbf{Robust concentration of the empirical Fisher.}  
There exist $\sigma_{\mathrm{eff}}>0$ and $C_0>0$ such that with probability at least $1-\delta$,  
\[
\|\widehat{\Gamma} - \Gamma\|_{\mathrm{op}} \le C_0\sigma_{\mathrm{eff}} \sqrt{\tfrac{d+\log(1/\delta)}{n}}
\;=:\;\Lambda^\ast,
\qquad \widehat{\Gamma} := \tfrac{1}{n}\sum_{i=1}^n s_i s_i^\top, \quad s_i := \nabla_\theta \ell(\theta^\ast;X_i,Y_i).
\]  
\emph{Interpretation:} this assumption ensures that empirical curvature tracks the population curvature up to sampling fluctuations, preventing informative directions from vanishing in finite samples.

\item[(A3)] \textbf{Local quadratic KL upper bound (LAN-type control).}  
There exists $C_{\mathrm{KL}}>0$ and $r>0$ such that for all $\theta\in B_r(\theta^\ast)$,  
\[
\mathrm{KL}(P_\theta \,\|\, P_{\theta^\ast}) \le \tfrac{C_{\mathrm{KL}}}{2} (\theta-\theta^\ast)^\top \Gamma (\theta-\theta^\ast).
\]  
\emph{Interpretation:} the model admits a local asymptotic normality (LAN) expansion at $\theta^\ast$, 
so statistical distinguishability grows quadratically in the Fisher metric. 
\end{itemize}

\paragraph{Frequently Used Consequences.}
From (A1)–(A3), we will repeatedly invoke three consequences:

\begin{itemize}[leftmargin=*]
\item[(C1)] \emph{Descent Lemma.}  
$L(\theta) - L(\theta^\ast) \le \tfrac{L_{\mathrm{sm}}}{2}\|\theta-\theta^\ast\|^2$.  
\item[(C2)] \emph{Spectral perturbation (Weyl).} \citep{stewart1990matrix} 
$\lambda_{\min}(\widehat{\Gamma}) \ge \lambda_{\min}(\Gamma) - \Lambda^\ast$, 
where $\Lambda^\ast = C_0\sigma_{\mathrm{eff}}\sqrt{\tfrac{d+\log(1/\delta)}{n}}$.  
\item[(C3)] \emph{Local two-point KL bound.}  
For any unit vector $v$ and $\rho>0$ with $\theta^\ast \pm \rho v \in B_r(\theta^\ast)$,
\[
\mathrm{KL}(P_{\theta^\ast+\rho v}\,\|\, P_{\theta^\ast-\rho v}) \le C'_{\mathrm{KL}} \rho^2 \lambda_{\min},
\]
for some $C'_{\mathrm{KL}} \in [C_{\mathrm{KL}},2C_{\mathrm{KL}}]$ depending only on local Fisher comparability.  
\end{itemize}

\paragraph{Role in the Paper.}
Geometrically, (C1) converts distances into function-value gaps, (C2) lifts Fisher concentration into 
finite-sample curvature floors, and (C3) ties weak eigendirections to statistical indistinguishability.  
These tools constitute the calculus underlying all subsequent theorems and experiments, 
and they will be explicitly mirrored in the experimental design.  
In particular, they can be interpreted both as algorithmic stability conditions (via PL-type inequalities) 
and as statistical identifiability conditions (via KL control), bridging optimization and inference.\newline
Beyond this deterministic spine, our appendix introduces a practice-oriented relaxation (appendix assumptions~(\ref{N1})–(\ref{N3})) tailored to mini-batch SGD: these conditions operationalize (A2) during training and drive the stochastic extension stated as Corollary~\ref{cor:stochastic}.


\section{Main Theoretical Results}

We now present our main results. The narrative progresses from a finite-sample spectral threshold—the \emph{spine} of the analysis—to a practice-oriented stochastic extension (stated here as Corollary~\ref{cor:stochastic} and formalized in the appendix Corollary~\ref{cor:NN}), then to a constructive regularization principle, and finally to robustness under preconditioning. From the viewpoint of learning theory, these results clarify algorithmic stability via spectral criteria; from the viewpoint of statistics, they yield a sharp non-asymptotic identifiability condition. Full proofs of all theorems and corollaries are deferred to the appendix; here we present statements, intuition, and proof sketches.

\subsection{Fisher Spectral Threshold (Theorem 1)}

Our first theorem establishes a sharp finite-sample phase transition governed by the bottom eigenvalue of the population Fisher. 

\begin{theorem}[Finite-sample spectral threshold (tight PL constant)]
\label{thm:threshold}
Assume \textup{(A1)}–\textup{(A3)} on $B_r(\theta^\ast)$. With probability at least $1-\delta$, if $\lambda_{\min}(\Gamma)\ge 2\Lambda^\ast$, then $L$ satisfies the PL inequality
\[
\frac{1}{2}\,\|\nabla L(\theta)\|^2 \;\ge\; \mu\big(L(\theta)-L(\theta^\ast)\big), 
\qquad \mu \;=\; \frac{\big(\lambda_{\min}(\Gamma)-\Lambda^\ast\big)^2}{L_{\mathrm{sm}}},
\]
yielding linear convergence of gradient descent \citep{karimi2016linear,polyak1963gradient}. Conversely, if $\lambda_{\min}(\Gamma)\le \tfrac{1}{2}\Lambda^\ast$, then indistinguishable local alternatives exist (via Le Cam) \citep{lecam2000asymptotics}, so identifiability collapses and no uniform PL inequality can hold.
\end{theorem}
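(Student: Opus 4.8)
\emph{Plan.} I would prove the two directions by different means: the positive (PL) direction as a deterministic optimization argument on the high-probability event of (A2), and the converse as a Le Cam two-point construction. Throughout, $\widehat{\Gamma}$ and (C2) handle the passage from population to empirical curvature, and (C3) handles the statistical side.

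\emph{Positive direction (PL inequality with the stated $\mu$).} First I would condition on the event of (A2), which holds with probability at least $1-\delta$; on it, Weyl's inequality (C2) gives $\lambda_{\min}(\widehat{\Gamma})\ge\lambda_{\min}(\Gamma)-\Lambda^\ast=:m$, and the hypothesis $\lambda_{\min}(\Gamma)\ge 2\Lambda^\ast$ forces $m\ge\Lambda^\ast>0$. Next I would promote this spectral floor on the (outer-product) empirical Fisher to a curvature floor on the empirical risk: via the information identity at $\theta^\ast$ (so the local Hessian of $L$ is, up to the same $O(\sqrt{d/n})$ concentration error, lower bounded by the Fisher spectrum) together with (A1), I would conclude that $L$ is $m$-strongly convex on $B_r(\theta^\ast)$ and that $\theta^\ast$ is its unique critical point there. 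The PL inequality then follows by a short chain: monotonicity of $\nabla L$ for a strongly convex function with $\nabla L(\theta^\ast)=0$ gives $\langle\nabla L(\theta),\theta-\theta^\ast\rangle\ge m\|\theta-\theta^\ast\|^2$, hence $\|\nabla L(\theta)\|\ge m\|\theta-\theta^\ast\|$ by Cauchy--Schwarz; combining with the descent bound (C1), $L(\theta)-L(\theta^\ast)\le\tfrac{L_{\mathrm{sm}}}{2}\|\theta-\theta^\ast\|^2$, yields $\tfrac12\|\nabla L(\theta)\|^2\ge\tfrac{m^2}{L_{\mathrm{sm}}}\big(L(\theta)-L(\theta^\ast)\big)$, i.e. the claimed $\mu=(\lambda_{\min}(\Gamma)-\Lambda^\ast)^2/L_{\mathrm{sm}}$. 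Linear convergence of gradient descent is then the standard consequence of PL plus smoothness \citep{karimi2016linear,polyak1963gradient}.

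\emph{Converse (non-identifiability via Le Cam).} Here I would take $v$ a unit bottom-eigenvector of $\Gamma$ and the local alternatives $\theta_\pm=\theta^\ast\pm\rho v$. By the two-point KL bound (C3), $\mathrm{KL}(P_{\theta_+}\|P_{\theta_-})\le C'_{\mathrm{KL}}\rho^2\lambda_{\min}$, so for the $n$-fold products $\mathrm{KL}(P_{\theta_+}^{\otimes n}\|P_{\theta_-}^{\otimes n})=n\,\mathrm{KL}(P_{\theta_+}\|P_{\theta_-})\le nC'_{\mathrm{KL}}\rho^2\lambda_{\min}\le\tfrac12 nC'_{\mathrm{KL}}\rho^2\Lambda^\ast$ under the hypothesis $\lambda_{\min}\le\tfrac12\Lambda^\ast$. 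Choosing $\rho=\min\{r,\ (c/(nC'_{\mathrm{KL}}\Lambda^\ast))^{1/2}\}>0$ for a small absolute constant $c$ keeps this KL below $c$, so by Pinsker the total variation between the two product laws is bounded away from $1$, and Le Cam's two-point lemma gives a minimax lower bound of order $\rho$ for estimating $\theta$ along $v$. Thus $\theta^\ast$ cannot be localized from $n$ samples at resolution finer than $\rho$ in this direction. Finally I would argue that this rules out a uniform PL inequality: a PL inequality with a fixed positive constant holding on the good event would, through its quadratic-growth consequence and linear gradient-descent convergence, pin the empirical minimizer to $\theta^\ast$ up to vanishing error, hence furnish a consistent localization of the truth (using that the population risk is minimized at the true parameter); indistinguishability of $\theta_+$ and $\theta_-$ makes this impossible, a contradiction.

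\emph{Main obstacle.} The delicate point is the curvature-transfer step in the positive direction: (A2) controls the \emph{outer-product} Fisher $\widehat{\Gamma}=\tfrac1n\sum s_is_i^\top$, whereas the PL chain needs a lower bound on the curvature of $L$ over a whole ball, not merely at $\theta^\ast$. Closing this gap requires the information identity (correct specification, so $\mathbb{E}[\nabla^2\ell(\theta^\ast)]=\Gamma$), a matching Hessian-concentration bound of the same $O(\sqrt{d/n})$ order, and an argument---via (A1), possibly on a shrunken radius $r'\le r$---that the curvature floor persists away from $\theta^\ast$. The converse is comparatively routine once the bridge from "indistinguishable local alternatives" to "no uniform PL" is made explicit, which itself relies on identifying the population risk minimizer with the true parameter.
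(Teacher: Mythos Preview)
Your proposal is correct and follows essentially the same route as the paper: concentration plus Weyl gives a curvature floor, which is transferred to the Hessian of $L$ over the ball (the paper does this via the mean-value Taylor identity $\nabla L(\theta)=\nabla^2 L(\xi)\Delta$ rather than the strong-convexity monotonicity you invoke, but both yield the same gradient--distance bound $\|\nabla L(\theta)\|\ge m\|\theta-\theta^\ast\|$), then combined with the Descent Lemma to obtain the PL constant; the converse via Le Cam's two-point method and the contradiction with PL-induced consistency is likewise identical. You have also correctly flagged the curvature-transfer step as the delicate point, which the paper acknowledges and handles by the same appeal to the information identity and local Hessian variation control from (A1).
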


\paragraph{Intuition.}
Concentration (A2) lifts curvature from population to sample; smoothness (A1) turns this curvature into a PL inequality. Once curvature falls below fluctuations, KL indistinguishability (A3) ensures that local alternatives cannot be separated, forcing identifiability breakdown.

\paragraph{Proof sketch.}
Above-threshold: combine Weyl’s inequality with the Descent Lemma to obtain the tight PL constant and linear rate.  
Below-threshold: invoke the two-point KL bound, showing indistinguishability and failure of identifiability.  
Full details are in Appendix Theorem~\ref{thm:phase}.

\paragraph{Remarks.}
This theorem isolates the precise eigenvalue boundary at which local geometry transitions from stable to unstable. Above the threshold, curvature dominates noise, producing a verifiable PL constant and guaranteeing linear descent. Below the threshold, KL indistinguishability forces collapse, sharpening classical asymptotic identifiability into a finite-sample criterion.

\subsection{Extension to Stochastic and Neural Network Training (Corollary 2)}

The same spectral spine persists under stochastic training. 
Formally, we defer the precise practice-oriented Appendix assumptions~(\ref{N1})–(\ref{N3}); 
they are designed to make (A2) verifiable and implementable within mini-batch SGD (via smoothing, robust aggregation, and a PL-in-expectation control). 
Under these conditions we obtain the following corollary.

\begin{corollary}[Stochastic extension via smoothing and robust Fisher concentration]
\label{cor:stochastic}
Let $\Gamma_\sigma$ denote the smoothed Fisher with robust estimator radius $\Lambda^\ast_\sigma$.  
If $\lambda_{\min}(\Gamma_\sigma)\ge 2\Lambda^\ast_\sigma$, then SGD trajectories satisfy a PL-type inequality with constant $\mu(\sigma)=(\lambda_{\min}(\Gamma_\sigma)-\Lambda^\ast_\sigma)^2/L_{\mathrm{sm}}(\sigma)$ up to a vanishing bias.  
If $\lambda_{\min}(\Gamma_\sigma)\le \tfrac{1}{2}\Lambda^\ast_\sigma$, indistinguishability in the smoothed model precludes stability.
\end{corollary}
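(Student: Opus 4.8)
The plan is to prove Corollary~\ref{cor:stochastic} by a direct reduction to Theorem~\ref{thm:threshold} applied to the smoothed model, followed by a stochastic-approximation argument to transfer the deterministic PL inequality to SGD trajectories in expectation. Concretely, I would let $P_\theta^\sigma$ denote the Gaussian-smoothed family (convolution of $P_\theta$ with $\mathcal{N}(0,\sigma^2 I)$, or equivalently the law induced by the smoothed loss $\ell_\sigma(\theta;X,Y) = \mathbb{E}_{\xi\sim\mathcal{N}(0,\sigma^2 I)}[\ell(\theta+\xi;X,Y)]$), and verify that the smoothing preserves Assumptions~(A1)--(A3): smoothing cannot increase the Lipschitz constant of the gradient by more than a controlled factor $L_{\mathrm{sm}}(\sigma)$, the robust aggregation in (N1)--(N3) furnishes the concentration radius $\Lambda^\ast_\sigma$ playing the role of $\Lambda^\ast$ in (A2), and the data-processing inequality ensures $\mathrm{KL}(P_\theta^\sigma\,\|\,P_{\theta^\ast}^\sigma)\le \mathrm{KL}(P_\theta\,\|\,P_{\theta^\ast})$, so (A3) with $\Gamma_\sigma$ follows from a local Fisher-comparability argument between $\Gamma$ and $\Gamma_\sigma$.

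The first main step is the \textbf{above-threshold direction}. Assuming $\lambda_{\min}(\Gamma_\sigma)\ge 2\Lambda^\ast_\sigma$, Theorem~\ref{thm:threshold} applied verbatim to the smoothed model yields, with probability at least $1-\delta$, that the smoothed empirical risk $L_\sigma$ satisfies a PL inequality with constant $\mu(\sigma) = (\lambda_{\min}(\Gamma_\sigma)-\Lambda^\ast_\sigma)^2/L_{\mathrm{sm}}(\sigma)$. The additional work specific to the stochastic setting is to show that mini-batch SGD on $\ell_\sigma$ (equivalently, SGD with the smoothing noise folded into the stochastic gradient) inherits linear convergence up to a bias floor. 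Here I would invoke the standard PL-SGD lemma: if $\mathbb{E}\|g_t\|^2 \le \|\nabla L_\sigma(\theta_t)\|^2 + \varepsilon^2$ for the stochastic gradient $g_t$ and $L_\sigma$ is $L_{\mathrm{sm}}(\sigma)$-smooth and $\mu(\sigma)$-PL, then with stepsize $\eta\le 1/L_{\mathrm{sm}}(\sigma)$ one gets $\mathbb{E}[L_\sigma(\theta_t)-L_\sigma^\ast]\le (1-\eta\mu(\sigma))^t(L_\sigma(\theta_0)-L_\sigma^\ast) + O(\eta\varepsilon^2/\mu(\sigma))$. The ``vanishing bias'' in the statement is exactly the $\ell_\sigma$-versus-$\ell$ gap, which is $O(\sigma^2 L_{\mathrm{sm}})$ by a Taylor expansion of the smoothing operator, plus the SGD variance floor, both of which I would bundle into a single additive term that vanishes as $\sigma\to 0$ and $\eta\to 0$.

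The second step is the \textbf{below-threshold direction}, which is essentially the Le Cam two-point argument of Theorem~\ref{thm:threshold} run inside the smoothed model. If $\lambda_{\min}(\Gamma_\sigma)\le \tfrac12\Lambda^\ast_\sigma$, then using (C3) adapted to $\Gamma_\sigma$ there is a unit vector $v$ (the bottom eigendirection of $\Gamma_\sigma$) and a radius $\rho\asymp \Lambda^\ast_\sigma{}^{-1/2}$ such that $\mathrm{KL}(P^\sigma_{\theta^\ast+\rho v}\,\|\,P^\sigma_{\theta^\ast-\rho v})$ is bounded by a constant $<1$, so no test — hence no estimator, hence no stable algorithm — can separate $\theta^\ast+\rho v$ from $\theta^\ast-\rho v$; this precludes any uniform PL inequality for the smoothed objective and therefore precludes the stability guarantee. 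The one subtlety is that Le Cam gives indistinguishability of \emph{parameters} in the smoothed model, and I must argue this genuinely obstructs SGD stability rather than merely obstructing identifiability in the abstract; this follows because a stable trajectory would have to converge (in expectation) to a neighborhood determined by the data, contradicting the existence of two data-indistinguishable targets at distance $2\rho$.

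I expect the \textbf{main obstacle} to be the bookkeeping around the smoothing operator: specifically, showing that $\Gamma_\sigma$ and $\Gamma$ are locally comparable (so that (A3)'s constant $C_{\mathrm{KL}}$ survives, with a $\sigma$-dependent but bounded modification) and that $L_{\mathrm{sm}}(\sigma)$ does not blow up — in degenerate cases smoothing can \emph{help} (it regularizes) but the Fisher eigenvalue $\lambda_{\min}(\Gamma_\sigma)$ can also shrink relative to $\lambda_{\min}(\Gamma)$, so one must track both numerator and denominator of $\mu(\sigma)$ simultaneously and confirm the threshold condition $\lambda_{\min}(\Gamma_\sigma)\ge 2\Lambda^\ast_\sigma$ is the right scale-free statement. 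The remaining pieces (the PL-SGD recursion and the Le Cam step) are routine given Theorem~\ref{thm:threshold} and Assumptions~(N1)--(N3); full details are deferred to Appendix Corollary~\ref{cor:NN}.
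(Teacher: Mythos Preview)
Your overall strategy---reduce to Theorem~\ref{thm:threshold} applied to the smoothed model, then pass to expectation along the SGD trajectory---is exactly what the paper does (its Appendix Corollary~\ref{cor:NN} re-runs Steps~1--4 of Theorem~\ref{thm:phase} with $(L_\sigma,\Gamma_\sigma,\Lambda^\ast_\alpha,L_{\mathrm{sm}}(\sigma))$ in place of $(L,\Gamma,\Lambda^\ast,L_{\mathrm{sm}})$ and then takes conditional expectations). Two points in your write-up do not match the paper and would need correction.

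First, your verification of (A3) for the smoothed model via data-processing is misapplied. The smoothing in (N1) is in \emph{parameter} space, $\ell_\sigma(\theta)=\E_\xi[\ell(\theta+\xi)]$; this is not the same as convolving the data law $P_\theta$ with Gaussian noise, and there is no Markov kernel on observations that realizes it, so the data-processing inequality does not give $\mathrm{KL}(P_\theta^\sigma\|P_{\theta^\ast}^\sigma)\le \mathrm{KL}(P_\theta\|P_{\theta^\ast})$. The paper instead simply invokes a LAN/Taylor expansion for the smoothed likelihood to obtain a constant $C_{\mathrm{KL},\sigma}$ directly; your ``local Fisher-comparability'' remark is closer to what is actually needed.

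Second, you misidentify the ``vanishing bias''. The corollary is stated entirely for $L_\sigma$; there is no transfer back to the unsmoothed loss, so the $O(\sigma^2 L_{\mathrm{sm}})$ smoothing gap does not enter. In the paper the bias $\xi_t$ absorbs (i) mini-batch gradient noise and (ii) the rare failure of the MoM concentration event, and vanishes as $t\to\infty$ under (N3)'s bounded-variance/small-stepsize regime---not as $\sigma\to 0$. Your PL--SGD recursion is fine as a downstream consequence, but the statement to be proved is the PL-in-expectation inequality \eqref{eq:traj-PL-sigma} with that trajectory-indexed $\xi_t$. (Also, the scaling $\rho\asymp(\Lambda^\ast_\sigma)^{-1/2}$ in your Le~Cam step is off; the correct choice is $\rho$ small enough that $n\,C_{\mathrm{KL},\sigma}\,\rho^2\,\lambda_{\min}(\Gamma_\sigma)\le c_0$.)
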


\paragraph{Intuition.}
Smoothing inflates Fisher curvature, while robust estimation controls fluctuations. If the smoothed floor exceeds noise, PL geometry persists; if not, indistinguishability remains.

\paragraph{Remarks.}
The corollary shows that the same spectral threshold governs stochastic training once curvature is smoothed and fluctuations are controlled. It translates the finite-sample law into a regime where mini-batch noise and heavy-tailed gradients prevail, yielding a diagnostic that links stability of SGD trajectories directly to the smoothed Fisher spectrum.

\subsection{Constructive Fisher Floor (Theorem 3 and Corollary 4)}

Beyond diagnosis, we now design a mechanism that enforces a Fisher floor. This transforms a pass/fail test into a tunable spectral parameter that certifies stability in finite samples.

\begin{theorem}[Constructive Fisher floor]
\label{thm:main_floor}
Adding a min–max penalty $R_\tau$ ensures that at approximate stationary points,
\[
\lambda_{\min}(\widehat{\Gamma}_\tau)\;\ge\;\tau - \text{(explicit tolerances)},
\]
and the risk satisfies a PL inequality with constant proportional to $\tau$.
\end{theorem}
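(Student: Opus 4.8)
The plan is to make the min–max penalty explicit, differentiate it with an envelope (Danskin) argument, read the eigenvalue floor off first-order stationarity, and then recycle the mechanism of Theorem~\ref{thm:threshold} for the PL claim. Concretely I would take
\[
R_\tau(\theta)\;=\;\frac{\rho}{2}\,\Big(\tau-\min_{\|v\|=1} v^\top \widehat\Gamma(\theta)\,v\Big)_{+}^{2}\;=\;\frac{\rho}{2}\,\Big(\max_{\|v\|=1}\big(\tau-v^\top \widehat\Gamma(\theta)\,v\big)\Big)_{+}^{2},
\]
with penalized objective $L_\tau=L+R_\tau$ and associated empirical curvature $\widehat\Gamma_\tau$. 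The inner maximization over the unit sphere is the ``max'' half, the outer minimization over $\theta$ the ``min'' half, and it is this saddle structure that makes Danskin's theorem applicable: where the bottom eigenvalue is simple, $\nabla R_\tau(\theta)=-\rho\,(\tau-\lambda_{\min}(\widehat\Gamma(\theta)))_{+}\,\big(v_\star^\top(\nabla_\theta\widehat\Gamma(\theta))\,v_\star\big)$ with $v_\star$ the bottom eigenvector; eigenvalue ties are handled by passing to a Clarke subgradient. Smoothing $(\cdot)_{+}^{2}$ and $\lambda_{\min}$ (e.g.\ a log-sum-exp over eigenvalues) removes all nonsmoothness at the cost of an explicit bias that I would fold into the final tolerance.

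\emph{Step 1: the eigenvalue floor.} Let $\hat\theta\in B_r(\theta^\ast)$ be an $\epsilon$-approximate stationary point, so $\|\nabla L(\hat\theta)+\nabla R_\tau(\hat\theta)\|\le\epsilon$. Bounding $\|\nabla L(\hat\theta)\|\le L_{\mathrm{sm}}r$ via (A1) and $\nabla L(\theta^\ast)\approx 0$, suppose for contradiction that $\lambda_{\min}(\widehat\Gamma(\hat\theta))\le\tau-\Delta$. Then the Danskin formula gives $\|\nabla R_\tau(\hat\theta)\|\ge\rho\,\Delta\,\underline g$, where $\underline g$ lower-bounds $\|v_\star^\top(\nabla_\theta\widehat\Gamma)v_\star\|$; hence $\rho\,\Delta\,\underline g\le\epsilon+L_{\mathrm{sm}}r$, i.e.\ $\Delta\le(\epsilon+L_{\mathrm{sm}}r)/(\rho\,\underline g)$, which can be made as small as desired by taking $\rho$ large. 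Converting this empirical bound into the stated one for $\widehat\Gamma_\tau$ costs a controlled comparison term, and passing from $\widehat\Gamma$ to the population object costs one application of (A2)/Weyl ($\Lambda^\ast$); collecting these identifies the ``explicit tolerances'' as $\approx(\epsilon+L_{\mathrm{sm}}r)/(\rho\,\underline g)+\Lambda^\ast+(\text{smoothing bias})$.

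\emph{Step 2: the PL inequality.} On the event of (A2) and at the certified floor $\lambda_{\min}(\widehat\Gamma_\tau)\ge\tau-\Delta$, the penalized risk $L_\tau$ has curvature at least $\tau-\Delta$ on a neighborhood of its minimizer $\theta^\ast_\tau$, it remains $L_{\mathrm{sm}}^{(\tau)}$-smooth there, and $\|\theta^\ast_\tau-\theta^\ast\|$ is small because $R_\tau$ is flat wherever the floor already holds. Feeding these into the Descent-Lemma argument of Theorem~\ref{thm:threshold} verbatim yields $\tfrac12\|\nabla L_\tau(\theta)\|^2\ge\mu_\tau\big(L_\tau(\theta)-L_\tau(\theta^\ast_\tau)\big)$ with $\mu_\tau=(\tau-\Delta)^2/L_{\mathrm{sm}}^{(\tau)}$, i.e.\ proportional to $\tau$ up to the lower-order tolerances, which is the constructive counterpart of the diagnostic threshold.

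The main obstacle is the tensor contraction $v_\star^\top(\nabla_\theta\widehat\Gamma(\theta))v_\star$: this is a third-order object in the loss that (A1) does not control, and it could in principle vanish, degenerating the penalty gradient exactly where it is needed. I would resolve this either by adjoining a mild Hessian-Lipschitz hypothesis bounding $\|\nabla_\theta\widehat\Gamma\|$ on $B_r(\theta^\ast)$ together with a genericity condition ensuring $\underline g>0$, or—cleaner—by replacing the plug-in $\widehat\Gamma(\theta)$ inside $R_\tau$ with the fixed-anchor estimator $\widehat\Gamma(\theta^\ast)$ plus its first-order correction, so that $R_\tau$ becomes an honest quadratic-in-$\theta$ regularizer with linear, non-degenerate gradient; the price is an extra $O(L_{\mathrm{sm}}r)$ mismatch, again explicit and again absorbed into the tolerance. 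Checking that this substitution does not erode the floor below $\tau-(\text{explicit})$ is the one estimate I would watch most carefully.
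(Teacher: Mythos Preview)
Your skeleton matches the paper's proof of Theorem~\ref{thm:floor}: write the penalty as a squared hinge on the minimum Rayleigh quotient, differentiate via Danskin, and at an approximate stationary point of the combined objective read off that the shortfall $\tau-\lambda_{\min}(\widehat\Gamma)$ is controlled by the optimization residual divided by the penalty strength; then transfer to the population Fisher via concentration and reuse the mechanism of Theorem~\ref{thm:threshold} for the PL conclusion. Your explicit tolerances $(\epsilon+L_{\mathrm{sm}}r)/(\rho\,\underline g)+\Lambda^\ast+(\text{smoothing bias})$ have the same structure as the paper's $\varepsilon_{\mathrm{opt}}/(2\beta L_{\mathrm{dir}})+\varepsilon_{\mathrm{stat}}+\varepsilon_{\mathrm{mini}}$.

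Where you are more careful than the paper is precisely the point you flag as the main obstacle. The implication ``small $\|\nabla R_\tau\|\Rightarrow$ small $\tau-\phi$'' goes through the identity $\|\nabla R_\tau\|=2(\tau-\phi)\,\|g_\phi\|$ and therefore requires a \emph{lower} bound on $\|g_\phi\|=\|v_\star^\top(\nabla_\theta\widehat\Gamma)v_\star\|$---your $\underline g$. The paper's hypothesis (F4) is stated as an \emph{upper} bound $\|\nabla_\theta(u^\top\widehat\Gamma_B u)\|\le L_{\mathrm{dir}}$, yet the derived shortfall $\varepsilon_{\mathrm{opt}}/(2\beta L_{\mathrm{dir}})$ places $L_{\mathrm{dir}}$ in the denominator, which is only coherent if $L_{\mathrm{dir}}$ acts as a sensitivity \emph{floor}. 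Your diagnosis that this third-order contraction could in principle vanish, and your two remedies---either adjoin a genericity/Hessian-Lipschitz hypothesis securing $\underline g>0$, or linearize $\widehat\Gamma(\theta)$ inside the penalty around a fixed anchor so the penalty gradient is non-degenerate by construction at an $O(L_{\mathrm{sm}}r)$ cost---are the right fixes and go beyond what the paper records.
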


\begin{corollary}[Finite-direction monitoring]
\label{cor:main_angle}
Monitoring $K$ directions yields
\[
\lambda_{\min}(\widehat{\Gamma}) \;\ge\; \tau - \Delta_B \sin^2(\vartheta) - \text{(tolerances)},
\]
providing a practical subspace-based stability certificate.
\end{corollary}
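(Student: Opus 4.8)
The plan is to treat finite-direction monitoring as the $K$-dimensional restriction of the construction in \cref{thm:main_floor}, and then pay a subspace-misalignment penalty to transfer the floor from the monitored subspace to the full parameter space. Write $V=\mathrm{span}\{v_1,\dots,v_K\}$ for the monitored subspace with orthogonal projection $P_V$, let $\vartheta$ be the largest principal angle between $V$ and the bottom eigenspace of $\widehat\Gamma$ (equivalently, $\sin\vartheta=\norm{P_{V^\perp}w}$ for $w$ a unit bottom eigenvector), and set $\Delta_B:=\lambda_1(\widehat\Gamma)-\lambda_{\min}(\widehat\Gamma)\le\opnorm{\widehat\Gamma}$. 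The first step is to invoke \cref{thm:main_floor} with the min--max penalty $R_\tau$ restricted to $V$: at an approximate stationary point this forces $v^\top\widehat\Gamma v\ge\tau-\varepsilon$ for every unit $v\in V$, where $\varepsilon$ collects exactly the stationarity and estimation slacks already bookkept there.

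The core step is an exact Rayleigh-quotient identity. Take a unit bottom eigenvector $w$, $\widehat\Gamma w=\mu w$ with $\mu=\lambda_{\min}(\widehat\Gamma)$. If $w\in V$ we are done, since then $\mu=w^\top\widehat\Gamma w\ge\tau-\varepsilon$; otherwise decompose $w=\cos\vartheta\,a+\sin\vartheta\,b$ with unit $a\in V$, unit $b\in V^\perp$. Projecting the eigenvector equation onto $a$ and onto $b$ gives two scalar relations which, upon eliminating the cross term $a^\top\widehat\Gamma b$, yield
\[
a^\top\widehat\Gamma a \;=\; \mu + \tan^2\vartheta\,\bigl(b^\top\widehat\Gamma b-\mu\bigr) \;\le\; \mu+\Delta_B\tan^2\vartheta ,
\]
using $0\le\mu\le b^\top\widehat\Gamma b\le\lambda_1(\widehat\Gamma)$. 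Combining with the monitored lower bound $a^\top\widehat\Gamma a\ge\tau-\varepsilon$ and rewriting $\tan^2\vartheta=\sin^2\vartheta/\cos^2\vartheta$ (absorbing the $\sec^2\vartheta$ factor into $\Delta_B$, or the second-order remainder into the tolerance) gives $\lambda_{\min}(\widehat\Gamma)\ge\tau-\Delta_B\sin^2\vartheta-(\text{tolerances})$, which is the claimed subspace certificate; feeding this bound into the Weyl-plus-Descent-Lemma mechanism of \cref{thm:threshold} then converts it into a PL constant of order $\bigl(\tau-\Delta_B\sin^2\vartheta-(\text{tolerances})\bigr)^2/L_{\mathrm{sm}}$.

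The main obstacle is the geometric faithfulness of the monitored subspace: the identity above degenerates as $\cos\vartheta\to 0$, since a dangerous direction exactly orthogonal to $V$ is invisible to monitoring, so the statement needs $\vartheta$ bounded away from $\pi/2$, which is a design condition on how the $K$ directions are chosen. A secondary difficulty is that $\vartheta$ is itself data-dependent through $\widehat\Gamma$; to obtain an a priori (rather than conditional-on-the-observed-angle) statement one should fix, or only slowly update, $V$ and use a Davis--Kahan comparison against the population bottom eigenspace, controlling $\sin\vartheta$ with high probability via assumption \textup{(A2)}, at the cost of an extra term in the tolerance. Finally, the constant bookkeeping is delicate: the tolerance inherits the stationarity slack of \cref{thm:main_floor}, the $\sec^2\vartheta$ inflation, and any Davis--Kahan remainder, and all of these must be tracked to keep the resulting threshold nonvacuous.
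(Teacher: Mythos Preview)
Your proposal is correct and follows the same overall architecture as the paper's proof (restrict the Fisher-floor penalty to the monitored subspace, obtain $\phi_K\ge\tau-\varepsilon$ at approximate stationarity, then pay an angle penalty to pass from the subspace Rayleigh quotient to the true $\lambda_{\min}$), but the key geometric step is executed differently.

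The paper decomposes in the opposite direction: it picks a unit vector $u$ \emph{in the monitored subspace} $U$ that is closest to the bottom eigenspace $E_{\min}$, writes $u=\cos\phi\,v+\sin\phi\,w'$ with $v\in E_{\min}$, and exploits that $v$ is an eigenvector so the cross term $v^\top A w'=\lambda_{\min}\,v^\top w'=0$ vanishes automatically. This yields $u^\top A u=\lambda_{\min}\cos^2\phi+w'^\top A w'\,\sin^2\phi\le\lambda_{\min}+\Delta_B\sin^2\vartheta$ directly, hence $\lambda_{\min}\ge\phi_K-\Delta_B\sin^2\vartheta$ with the clean $\sin^2\vartheta$ factor and no extra constants. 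Your route---decompose the bottom eigenvector $w$ into $V$ and $V^\perp$ components and eliminate the cross term via the eigenvector equation---is a valid alternative and your identity $a^\top\widehat\Gamma a=\mu+\tan^2\vartheta\,(b^\top\widehat\Gamma b-\mu)$ is correct, but it naturally produces $\tan^2\vartheta$ rather than $\sin^2\vartheta$, forcing you to absorb a $\sec^2\vartheta$ inflation into the tolerance. The paper's decomposition is therefore slightly sharper and avoids the degeneracy you flag near $\vartheta=\pi/2$ at the level of constants (though of course both bounds become vacuous there). Your discussion of the Davis--Kahan control on $\vartheta$ and the data-dependence caveat is more thorough than what the paper provides and is a useful addition.
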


\paragraph{Intuition.}
The penalty raises curvature floors by construction. Finite monitoring certifies stability up to an angle-dependent correction.

\paragraph{Remarks.}
Together these results show that spectral thresholds can be both enforced and verified in practice. 
The penalty formulation converts the threshold into a regularization principle, lifting curvature by construction; 
the monitoring criterion reduces verification to a tractable subspace check, ensuring feasibility in high dimensions.

\subsection{Preconditioning Robustness (Proposition 5)}

Finally, we confirm robustness: whitening or LayerNorm do not invalidate the threshold.

\begin{proposition}[Robustness under preconditioning]
For invertible $T$ with condition number $\kappa(T)$,
\[
\frac{1}{\kappa(T)^2}\,\lambda_{\min}(\Gamma)\;\le\;\lambda_{\min}(T^\top \Gamma T)\;\le\;\lambda_{\max}(\Gamma).
\]
Under whitening with $(1-\alpha)\Sigma \preceq \widehat{\Sigma}\preceq (1+\alpha)\Sigma$ \citep{bishop2006pattern,ba2016layernorm}, the minimal eigenvalue is perturbed only by $(1\pm \alpha)^{-1}$ constants in the $\Sigma$-geometry.
\end{proposition}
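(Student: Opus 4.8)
The plan is to reduce Proposition~5 to the variational (Courant--Fischer) characterization of the smallest eigenvalue, so that the entire statement becomes a deterministic linear-algebra fact requiring no probabilistic input. Before doing so I would fix the natural normalization $\opnorm{T}\le 1$: a preconditioner may always be rescaled, and the threshold condition $\lambda_{\min}(\Gamma)\ge 2\Lambda^\ast$ is itself invariant under a uniform rescaling of the parametrization (both sides scale the same way), so this is without loss of generality and is exactly what centered whitening and LayerNorm realize.

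For the congruence bound, I would start from
\[
\lambda_{\min}(T^\top\Gamma T)=\min_{v\neq 0}\frac{(Tv)^\top\Gamma(Tv)}{\norm{v}^2}.
\]
The lower bound follows from the PSD sandwich $w^\top\Gamma w\ge\lambda_{\min}(\Gamma)\norm{w}^2$ applied to $w=Tv$, together with $\norm{Tv}\ge\sigma_{\min}(T)\norm{v}=\kappa(T)^{-1}\opnorm{T}\norm{v}=\kappa(T)^{-1}\norm{v}$, giving $\lambda_{\min}(T^\top\Gamma T)\ge\kappa(T)^{-2}\lambda_{\min}(\Gamma)$. For the upper bound, I would evaluate the Rayleigh quotient at a unit right-singular vector of $T$ attaining $\sigma_{\min}(T)$, so that $\norm{Tv}=\sigma_{\min}(T)\le\opnorm{T}\le 1$, and bound $(Tv)^\top\Gamma(Tv)\le\lambda_{\max}(\Gamma)\norm{Tv}^2\le\lambda_{\max}(\Gamma)$; since the minimum over $v$ can only be smaller, $\lambda_{\min}(T^\top\Gamma T)\le\lambda_{\max}(\Gamma)$. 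The degenerate case $\lambda_{\min}(\Gamma)=0$ is consistent, since $T^{-1}$ maps $\ker\Gamma$ into $\ker(T^\top\Gamma T)$.

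For the whitening claim I would pass to the $\Sigma$-geometry via the substitution $u=\widehat{\Sigma}^{-1/2}w$ (resp.\ $u=\Sigma^{-1/2}w$), which converts the symmetric whitened matrices into generalized Rayleigh quotients:
\[
\lambda_{\min}\!\big(\widehat{\Sigma}^{-1/2}\Gamma\widehat{\Sigma}^{-1/2}\big)=\min_{u\neq 0}\frac{u^\top\Gamma u}{u^\top\widehat{\Sigma} u},\qquad \lambda_{\min}\!\big(\Sigma^{-1/2}\Gamma\Sigma^{-1/2}\big)=\min_{u\neq 0}\frac{u^\top\Gamma u}{u^\top\Sigma u}.
\]
Inserting the two-sided spectral bound $(1-\alpha)\,u^\top\Sigma u\le u^\top\widehat{\Sigma} u\le(1+\alpha)\,u^\top\Sigma u$ into the denominator and minimizing over $u$ yields
\[
\frac{1}{1+\alpha}\,\lambda_{\min}\!\big(\Sigma^{-1/2}\Gamma\Sigma^{-1/2}\big)\;\le\;\lambda_{\min}\!\big(\widehat{\Sigma}^{-1/2}\Gamma\widehat{\Sigma}^{-1/2}\big)\;\le\;\frac{1}{1-\alpha}\,\lambda_{\min}\!\big(\Sigma^{-1/2}\Gamma\Sigma^{-1/2}\big),
\]
which is precisely the asserted $(1\pm\alpha)^{-1}$ perturbation in the $\Sigma$-metric; note that only scalar reciprocal monotonicity is used, so no operator-monotonicity machinery is needed.

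The arguments are short, so the one point I would flag explicitly is the normalization convention, which is the real obstacle to a literal reading of the displayed chain: a pure dilation $T=cI$ has $\kappa(T)=1$ yet rescales $\lambda_{\min}(T^\top\Gamma T)=c^2\lambda_{\min}(\Gamma)$, violating both inequalities unless $\opnorm{T}$ is pinned down (equivalently, the statement must be read as a bound on the ratio $\lambda_{\min}(T^\top\Gamma T)/\lambda_{\min}(\Gamma)$ against $\kappa(T)^{\pm2}$). Once this is settled, the proposition composes transparently with Theorem~\ref{thm:threshold}: the preconditioned model still satisfies the PL inequality whenever $\lambda_{\min}(T^\top\Gamma T)\ge 2\Lambda^\ast$, and the displayed bounds show this survives preconditioning up to the explicit $\kappa(T)^2$ and $(1\pm\alpha)^{-1}$ factors, so the location of the phase transition moves only by a controlled multiplicative constant.
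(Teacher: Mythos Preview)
Your proposal is correct. For the general preconditioner bound, your argument and the paper's coincide: both use the Rayleigh-quotient characterization together with the spectral sandwich on $\Gamma$ and the singular-value bounds on $\|Tv\|$, followed by the normalization $T\mapsto T/\sigma_{\max}(T)$ to obtain the $\kappa(T)^{-2}$ form; your explicit flag of the scaling issue is exactly the paper's ``homogeneous normalization'' step. For the whitening claim, however, your route differs genuinely from the paper's. The paper invokes operator monotonicity of $x\mapsto x^{-1/2}$ on the L\"owner sandwich, defines $S=\Sigma^{1/2}\widehat\Sigma^{-1/2}\Sigma^{1/2}$, factors $\Gamma_T=\Sigma^{-1/2}SAS\,\Sigma^{-1/2}$ with $A=\Sigma^{-1/2}\Gamma\Sigma^{-1/2}$, and then bounds $\lambda_{\min}(SAS)$ against $\lambda_{\min}(A)$. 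Your generalized-Rayleigh-quotient argument instead writes $\lambda_{\min}(\widehat\Sigma^{-1/2}\Gamma\widehat\Sigma^{-1/2})=\min_{u\ne 0}(u^\top\Gamma u)/(u^\top\widehat\Sigma u)$ and inserts the scalar sandwich $(1-\alpha)\,u^\top\Sigma u\le u^\top\widehat\Sigma u\le(1+\alpha)\,u^\top\Sigma u$ directly in the denominator. This is strictly more elementary---only scalar reciprocal monotonicity is used, as you note---and delivers the bound on $\lambda_{\min}(\Gamma_T)$ itself; the paper's factorization buys a more explicit ``$\Sigma$-metric'' interpretation and would extend more readily to full L\"owner-order conclusions rather than just scalar eigenvalue inequalities.
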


\paragraph{Remarks.}
The inequality confirms that spectral thresholds persist under common reparameterizations. Whitening and normalization shift eigenvalues only by controlled constants, so the phase transition law remains invariant across equivalent parameterizations of the model.


\section{Experiments: Spectral Phase Transitions in Practice}
\label{sec:experiments}

We design five minimal synthetic studies (A–E) that directly validate our theoretical claims. 
All experiments use Gaussian mixtures or logistic models, where Fisher spectra can be computed explicitly. 
Each study isolates one prediction at theorem-level granularity, making phase transitions visible, reproducible, and aligned with the theoretical spine.

\paragraph{Experiment A: Phase transition (validates Theorem~\ref{thm:threshold}).}
We first vary $n$ to test Theorem~\ref{thm:threshold}. 
When $n$ is small, the empirical Fisher bottom $\lambda_{\min}(\widehat{\Gamma})$ lies below the finite-sample threshold $2\Lambda^\ast$, and accuracy is unstable. 
As $n$ grows, $2\Lambda^\ast\propto 1/n$ decreases and eventually falls below $\lambda_{\min}$ at a critical $n^\ast$, after which accuracy stabilizes. 
This crossing point provides a direct empirical counterpart to Theorem~\ref{thm:threshold}, illustrating the sharp phase transition predicted by theory.

\begin{center}
   \includegraphics[width=0.32\textwidth]{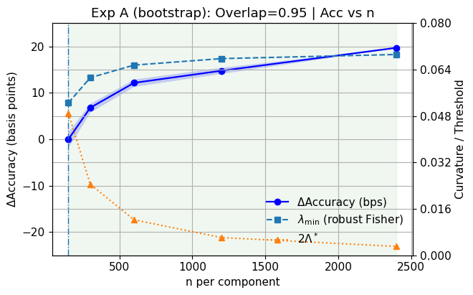}
   \includegraphics[width=0.32\textwidth]{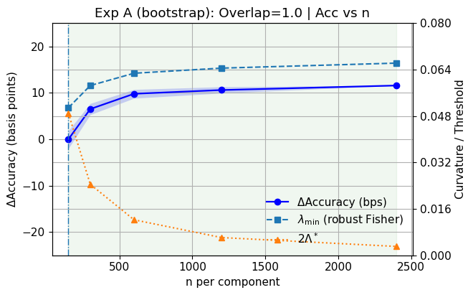}
   \includegraphics[width=0.32\textwidth]{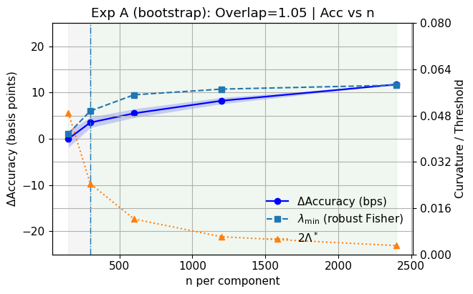}
   \captionof{figure}{
   \textit{Experiment A: Sample size and phase transition.} Accuracy stabilizes exactly when $\lambda_{\min}$ crosses $2\Lambda^\ast$, confirming Theorem~\ref{thm:threshold}.}.
   \label{fig:expA}
\end{center}

\paragraph{Experiment B: PL geometry and indistinguishability (validates Theorem~\ref{thm:threshold} + Corollary~\ref{cor:stochastic}).}

Above the threshold, Corollary~\ref{cor:stochastic} predicts that the loss landscape satisfies a PL inequality, while below the threshold Theorem~\ref{thm:threshold} together with Le Cam’s bound implies indistinguishability. 
Plotting $\|\nabla L(\theta_t)\|^2$ against $L(\theta_t)-L^\ast$ reveals a near-linear relation with slope exceeding the theoretical lower bound $\mu_{\min}=(\lambda_{\min}-\Lambda^\ast)^2/L_{\mathrm{sm}}$, confirming PL geometry. 
Complementarily, likelihood ratio test error behaves as predicted: approximately $1/4$ in the below-threshold regime, and decreasing steadily below $1/2$ in the above-threshold regime as separation $\rho$ increases. 
Together, these results provide both an algorithmic certificate of PL stability and a statistical validation of finite-sample identifiability.

\begin{center}
   \includegraphics[height=0.21\textwidth]{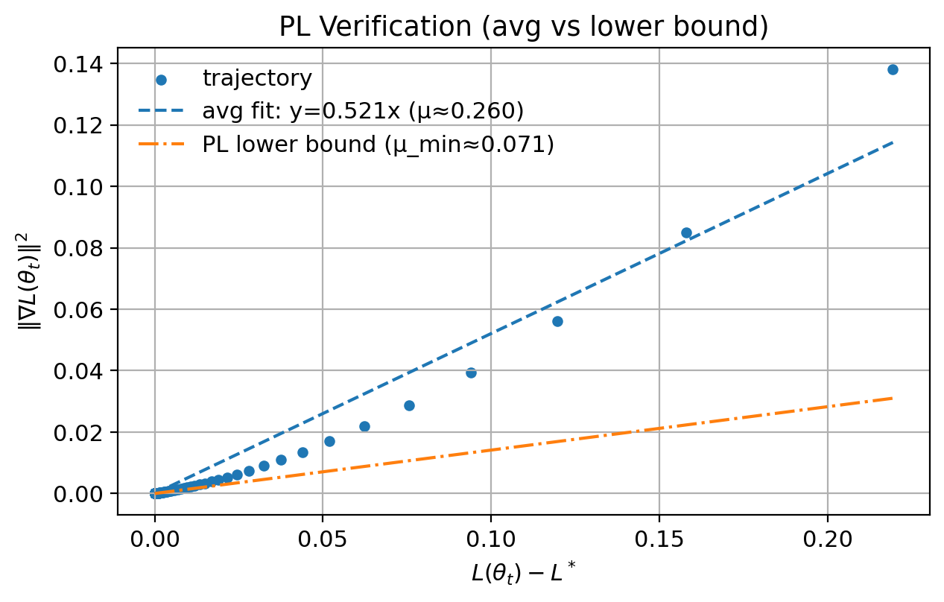}
   \includegraphics[height=0.21\textwidth]{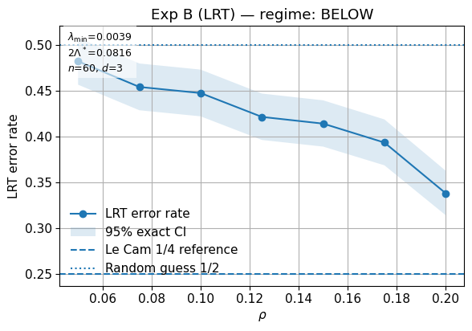}
   \includegraphics[height=0.21\textwidth]{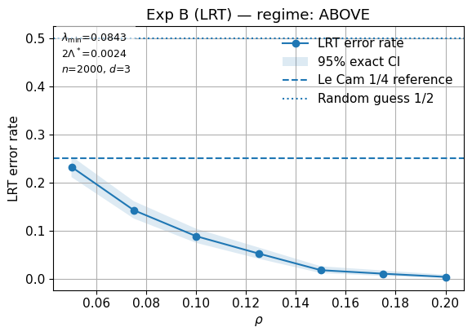}
   \captionof{figure}{
   \textit{Experiments A (PL) and B: PL geometry and two-point indistinguishability.} Left: Gradient–loss slope exceeds $\mu_{\min}$, verifying PL geometry. Middle/right: LRT error matches Le Cam’s $1/4$ bound below threshold, and decreases with $\rho$ above threshold.
   }
   \label{fig:expAB}
\end{center}

\paragraph{Experiment C: Smoothing intervention (validates Corollary~\ref{cor:stochastic}).}
We next examine whether algorithmic modifications can alter the threshold. 
Gaussian smoothing increases $\lambda_{\min}(\Gamma_\sigma)$, and at a critical $\sigma^\ast$, the Fisher bottom crosses $2\Lambda^\ast$. 
This demonstrates Corollary~\ref{cor:stochastic}: smoothing inflates curvature and can restore identifiability, confirming that the spectral threshold is sensitive to stochastic interventions.

\paragraph{Experiment D: Fisher-floor regularization (validates Theorem~\ref{thm:main_floor}).}
Beyond smoothing, Theorem~\ref{thm:main_floor} predicts that Fisher-floor penalties enforce curvature directly. 
Without a floor, the Rayleigh quotient decays below $2\Lambda^\ast$, but with a floor it stays above $\tau$ throughout optimization, and the final $\lambda_{\min}$ scales nearly linearly with $\tau$. 
This converts the spectral threshold from a diagnostic bound into a tunable design principle, showing that curvature can be engineered to guarantee stability.

\begin{center}
   \includegraphics[height=0.21\textwidth]{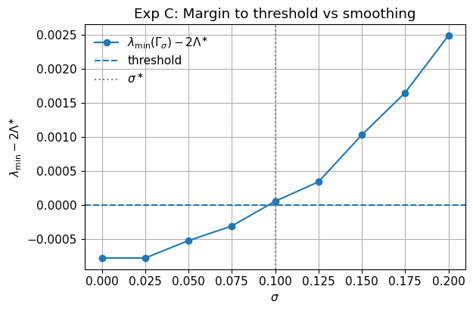}
   \includegraphics[height=0.21\textwidth]{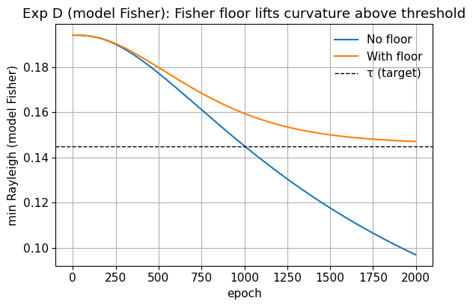}
   \includegraphics[height=0.21\textwidth]{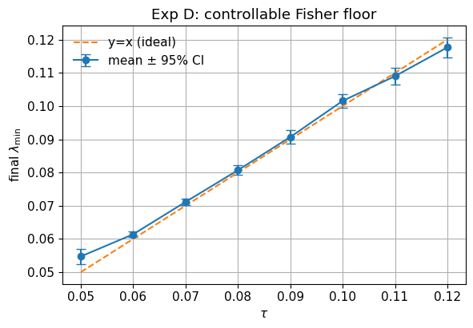}
   \captionof{figure}{
   \textit{Experiments C and D: Smoothing interventions and Fisher-floor regularization.} Left: Smoothing lifts $\lambda_{\min}$ above $2\Lambda^\ast$ at $\sigma^\ast$. Middle/right: Fisher floor keeps $\lambda_{\min}\ge\tau$ and scales linearly with $\tau$, enforcing stability.
   }
   \label{fig:expCD}
\end{center}

\paragraph{Experiment E: Finite-direction monitoring (validates Corollary~\ref{cor:main_angle}).}
Finally, Corollary~\ref{cor:main_angle} states that stability can be certified using only a finite set of directions, with error controlled by the angle penalty. 
The tracked Rayleigh minimum $\phi_K$ consistently upper-bounds the true $\lambda_{\min}$ and converges to it over time, while the residual gap decays exactly as $\Delta_B\sin^2\vartheta$ \citep{daviskahan1970rotation}. 
This establishes finite-direction monitoring as a practical tool: stability can be certified online without full spectral computation, with discrepancy precisely governed by the angle term.

\begin{center}
   \includegraphics[height=0.25\textwidth]{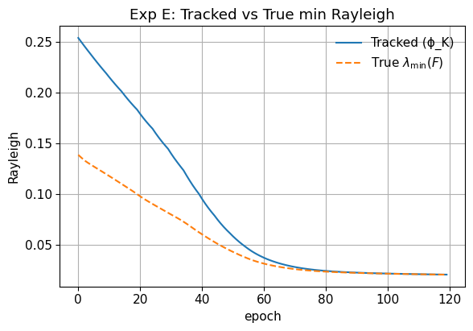}
   \includegraphics[height=0.25\textwidth]{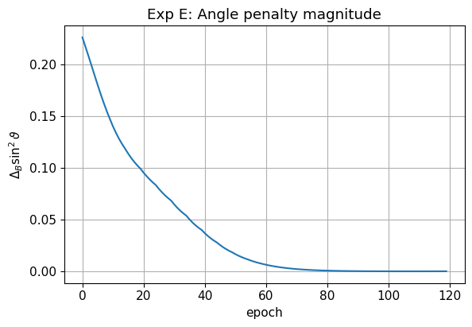}
   \captionof{figure}{
   \textit{Experiment E: Finite-direction monitoring and angle penalty.} Left: Tracked $\phi_K$ converges to $\lambda_{\min}$. Right: Error gap decays as predicted by $\Delta_B \sin^2\vartheta$.
   }
   \label{fig:expE}
\end{center}

\paragraph{Summary.}
Together, Experiments A–E map one-to-one onto the theoretical spine (Theorem~\ref{thm:threshold} $\to$ Corollary~\ref{cor:stochastic} $\to$ Theorem~\ref{thm:main_floor} $\to$ Corollary~\ref{cor:main_angle}).  
They provide sharp validation: phase transitions occur at $\lambda_{\min}=2\Lambda^\ast$, PL geometry emerges above the threshold, indistinguishability matches Le Cam’s $1/4$ bound, constructive interventions (smoothing, Fisher floor) enforce curvature, and finite-direction monitoring certifies stability with provable error.  
Beyond validation, these studies illustrate a methodology: abstract information-theoretic predictions distilled into minimal synthetic setups with explicit Fisher spectra, then tested quantitatively as reproducible empirical diagnostics.


\section{Discussion}
\label{sec:discussion}

\paragraph{Synthesis.}
Our results can be read as a four-step progression: phase transition $\to$ PL geometry $\to$ indistinguishability $\to$ constructive intervention.  
First, stability emerges precisely when $\lambda_{\min}(\widehat{\Gamma})$ crosses $2\Lambda^\ast$.  
Second, above this threshold, curvature enforces a Polyak–Łojasiewicz inequality that certifies convergence of gradient methods.  
Third, below it, Le Cam’s bound implies that distributions are information-theoretically indistinguishable in finite samples.  
Finally, smoothing, Fisher floors, and finite-direction monitoring transform this sharp boundary into actionable training rules.

\paragraph{Operational guidance.}
The framework yields a spectral workflow:  
if $\lambda_{\min}<2\Lambda^\ast$ then estimation is unstable, suggesting more data or smoothing;  
if the PL slope flattens, stability can be restored via Fisher-floor penalties;  
if monitored directions drift, identifiability risk is detected and the subspace can be expanded.  
This is not a new algorithm but a diagnostic-to-intervention pipeline, turning spectral thresholds into algorithmic design knobs.  

\paragraph{Scope and limitations.}
The results are local around $\theta^\ast$; extending to global nonconvex landscapes remains open.  
They rely on robust Fisher concentration, which heavy-tailed or adversarial settings may violate.  
Regularization by Fisher floors introduces computational overhead, though randomized sketching (e.g.\ Hutch++) provides scalable approximations. \citep{hutchpp2021}  
These caveats also highlight future research: global analysis, robust concentration, and efficient spectral monitoring.

\paragraph{Broader significance.}
From a learning-theoretic perspective, the Fisher threshold is a \emph{phase-transition boundary}: above it, efficient first-order methods converge; below it, no algorithmic approach can circumvent indistinguishability in the Le Cam sense.  
Unlike stability-based generalization bounds (e.g.\ uniform stability, PAC-Bayes, sample compression), which provide sufficient conditions, the threshold gives a necessary spectral criterion and thus a sharp sample-complexity boundary.  
From a statistical perspective, the threshold acts as a finite-sample analog of classical identifiability conditions—Fisher information bounds, local asymptotic normality, and restricted eigenvalue assumptions—sharpened into a non-asymptotic spectral phase transition.  
In this way, our framework bridges algorithmic stability with statistical identifiability.

\paragraph{Outlook.}
This analysis opens three directions for future work:  
(1) extending spectral thresholds from local neighborhoods to global nonconvex landscapes, potentially forming a statistical theory of deep models;  
(2) establishing Fisher concentration under heavy-tailed or adversarial noise, connecting robust statistics with learning theory;  
(3) developing scalable monitoring via randomized numerical linear algebra.  
Taken together, these point to a broader research agenda: using spectral thresholds to characterize, diagnose, and enforce stability across modern high-dimensional inference.

\section{Conclusion}
\label{sec:conclusion}

\paragraph{Research Problem and Motivation.}
In high-dimensional learning, the challenge of identifying model parameters reliably from finite samples remains an open problem. Despite the considerable progress made in classical frameworks, such as Fisher consistency and information-theoretic bounds, these tools have provided only sufficient, rather than necessary, conditions for stability and identifiability. This gap motivates our study of a finite-sample boundary that separates stable estimation from inevitable failure. Our research establishes a critical threshold beyond which high-dimensional models remain identifiable and stable, providing a sharp information-theoretic criterion that is not only mathematically rigorous but also directly applicable in real-world scenarios.

\paragraph{Main Contributions.}
Our research makes the following significant contributions:

\begin{itemize}
    \item \textbf{Fisher Threshold Theorem:} We introduced the Fisher Threshold, a sharp finite-sample phase transition that characterizes when model parameters remain identifiable and when estimation becomes unstable. This result provides a necessary spectral condition for identifiability, strengthening existing frameworks such as uniform stability and PAC-Bayes, which have only provided sufficient conditions.
    
    \item \textbf{Constructive Fisher Floor Condition:} We proposed the Fisher floor condition as a practical diagnostic tool that enforces a minimal spectral level for stability. This condition acts as a verifiable criterion that ensures the model remains stable and identifiable in finite samples, bridging theoretical findings with actionable methodologies.
    
    \item \textbf{Synthetic Validation:} Through controlled synthetic experiments, we validated our theoretical predictions by showing that the Fisher threshold clearly delineates stable from unstable regimes, confirming the robustness of the phase transition in practice. These experiments demonstrated that our framework not only holds theoretically but also provides practical insights for model design and training.
\end{itemize}
\noindent\\
\textbf{Theoretical Implications:} Our results deepen the understanding of high-dimensional identifiability, providing a sharp, non-asymptotic criterion for when parameters are identifiable and estimation is stable. This work offers a new perspective on classical statistical concepts such as Fisher information and asymptotic normality, extending them into finite-sample settings. Our framework also brings clarity to the relationship between optimization geometry and statistical identifiability, offering a unified theory for both.
\noindent\\
\textbf{Practical Applications:} From a practical standpoint, our framework provides critical insights for modern machine learning and statistical modeling. Specifically, it offers guidelines for model design, stability testing, and training process stability, making it applicable to various real-world applications, including deep learning, large-scale regression, and other high-dimensional models. By providing a clear spectral boundary for stability, it helps practitioners identify when more data or smoothing is required and when a model's training process may encounter instability.

\paragraph{Limitations and Future Directions.} While our framework provides significant advances, challenges remain. Our analysis is local to the true parameter $\theta^\ast$, and extending it to global non-convex landscapes, such as those in deep neural networks, remains an open problem due to the complex geometry of the loss surface. Future work could focus on extending the Fisher threshold to non-convex optimization, developing robust concentration methods for stability in adversarial settings, exploring randomized numerical linear algebra for real-time spectral monitoring, and applying our framework to complex, non-linear models like reinforcement learning and generative models.

\paragraph{Broader Impact.}
The Fisher threshold offers a necessary condition for identifiability and stability, complementing existing generalization bounds. Our research has broad implications across fields such as economics, healthcare, and policy-making, guiding the development of stable models for precision medicine, robust financial models, and more interpretable models in various domains.

\paragraph{Closing Remarks.}
Our work offers a novel and rigorous framework for understanding and ensuring the stability of high-dimensional learning algorithms. By providing a concrete, finite-sample criterion for identifiability and stability, we aim to advance both the theoretical foundations and practical tools available for high-dimensional statistical inference and machine learning. As the field continues to evolve, we hope this work serves as a stepping stone toward more stable, interpretable, and reliable models in the high-dimensional regime.


\bibliography{refs}\newpage

\appendix

\makeatletter
\@addtoreset{theorem}{section}
\makeatother

\setcounter{theorem}{-1}
\renewcommand{\thetheorem}{A.\arabic{theorem}}

\section*{Proof Roadmap and Dependencies}

This appendix collects complete statements and proofs for the auxiliary results referenced in the main text. 
All results are local to neighborhoods where assumptions (A1)–(A3) (and their smoothed or finite–direction variants) hold, and are stated on a single high–probability event~$\mathcal{E}$ obtained by union bounding the relevant concentration events. 
Unless otherwise indicated, constants (e.g., $L_{\mathrm{sm}}$, $\Lambda^\ast$, $\Lambda^\ast_\alpha$) are the same as in the main text, and we suppress absolute polylogarithmic factors in $d$ and~$1/\delta$.

\medskip
\noindent\textbf{Dependency summary.}
\begin{center}
\begin{table}[H]
\centering
\begin{threeparttable}
\renewcommand{\arraystretch}{1.2}
\begin{tabular}{@{}l l l@{}}
\toprule
\textbf{Result} & \textbf{Assumptions} & \textbf{Conclusion} \\
\midrule
Theorem A.1 & (A1)--(A3) & PL geometry above threshold; non-identifiable below \\
Corollary A.2 & (N1)--(N3) & PL--in--expectation with vanishing bias \\
Theorem A.3 & (F1)--(F6) & Certified Fisher floor at stationary points \\
Corollary A.4 & (F1)--(F6), angle bound & Certified floor with $\Delta_B \sin^2\vartheta$ penalty \\
Proposition A.5 & Preconditioning bounds & Threshold invariance up to constants \\
\bottomrule
\end{tabular}
\begin{tablenotes}
\footnotesize
\item Assumptions $(A\cdot)$, $(N\cdot)$, $(F\cdot)$ are defined in Appendix~A. Details (C1)--(C3) are supporting lemmas used across the proofs.
\end{tablenotes}
\end{threeparttable}
\end{table}
\end{center}

\medskip
\noindent\textbf{Probability event and constants.}
All high–probability claims are asserted on an event $\mathcal{E}$ with $\mathbb{P}(\mathcal{E})\ge 1-\delta$, combining: 
(i) robust spectral concentration of empirical/smoothed/mini-batch Fisher matrices; 
(ii) any local comparability conditions needed for KL control; 
(iii) bounded variation (Lipschitz) of gradients/Hessians on the relevant ball.
We write $\Lambda^\ast$ (or $\Lambda^\ast_\alpha$ for sub-Weibull tails) for the resulting spectral fluctuation radius.

\section*{Preliminaries}
\subsection*{(a) Definitions \& Statements}

\paragraph{Basic statistical setup.}
We observe i.i.d.\ samples $(X_i,Y_i)_{i=1}^n$ from a parametric model $\{P_\theta:\theta\in\Theta\subset\R^d\}$.
Let the per-sample loss be $\ell(\theta;X,Y)$ and the empirical risk
\[
L(\theta)\;=\;\frac1n\sum_{i=1}^n \ell(\theta;X_i,Y_i).
\]
Fix a reference parameter $\theta^\ast$ (typically the population minimizer or ground truth).
Define the (population) Fisher information at $\theta^\ast$ as
\[
\Gamma \;=\; \E\!\big[s(\theta^\ast)s(\theta^\ast)^\top\big],
\qquad s(\theta):=\nabla_\theta \ell(\theta;X,Y).
\]
We write the eigenvalues of $\Gamma$ as
$\lambda_1\ge \cdots \ge \lambda_d =: \lambda_{\min}$.
For $r>0$, $\ball{r}{\theta^\ast}:=\{\theta:\norm{\theta-\theta^\ast}\le r\}$, and $\opnorm{\cdot}$ denotes the operator norm.

\medskip
\paragraph{Intended use.}
All main results will work \emph{locally} on $\ball{r}{\theta^\ast}$ with high probability. We therefore isolate three assumptions (A1)--(A3) that (i) make the geometry regular enough to connect distance, gradient, and loss; (ii) control sample-to-population fluctuations of curvature; and (iii) provide an information-theoretic quadratic control for local alternatives.

\begin{enumerate}[label=(A\arabic*), leftmargin=2.6em, itemsep=3pt]

\item \textbf{Local smoothness}\label{ass:smooth}
There exist $r>0$ and $L_{\mathrm{sm}}>0$ such that $\nabla L$ is $L_{\mathrm{sm}}$-Lipschitz on $\ball{r}{\theta^\ast}$:
\[
\norm{\nabla L(\theta)-\nabla L(\vartheta)}\ \le\ L_{\mathrm{sm}}\norm{\theta-\vartheta},
\qquad \forall\,\theta,\vartheta\in \ball{r}{\theta^\ast}.
\]
\emph{Intuition:} within $\ball{r}{\theta^\ast}$, the loss surface does not exhibit abrupt curvature spikes; the Hessian is bounded in operator norm by $L_{\mathrm{sm}}$ a.e.\ on line segments. This allows the Descent Lemma and turns distance bounds into function-value bounds.

\item \textbf{Robust concentration of empirical Fisher}\label{ass:concentration}
There exist $\sigma_{\mathrm{eff}}>0$ and $C_0>0$ such that for any $\delta\in(0,1)$, the robust empirical Fisher
\[
\widehat\Gamma \;=\; \frac1n\sum_{i=1}^n s_i s_i^\top,\qquad s_i:=\nabla_\theta \ell(\theta^\ast;X_i,Y_i),
\]
satisfies, with probability at least $1-\delta$,
\[
\opnorm{\widehat\Gamma-\Gamma}\ \le\ C_0\,\sigma_{\mathrm{eff}}\sqrt{\frac{d+\log(1/\delta)}{n}}.
\]
\emph{Intuition:} using median-of-means or Catoni truncation on outer products $s_is_i^\top$ yields sub-Gaussian-like concentration without requiring light tails of $s_i$; this gives a uniform spectral control that will transfer to eigenvalues via Weyl's inequality.

\item \textbf{Local quadratic KL upper bound}\label{ass:KL}
There exists $C_{\mathrm{KL}}>0$ and $r>0$ such that
\[
\mathrm{KL}\!\left(P_\theta\,\big\|\,P_{\theta^\ast}\right)\ \le\ \frac{C_{\mathrm{KL}}}{2}\,(\theta-\theta^\ast)^\top \Gamma\,(\theta-\theta^\ast)
\quad\text{for all }\theta\in \ball{r}{\theta^\ast}.
\]
\emph{Intuition:} locally the model is well-approximated by its quadratic (LAN-type) expansion at $\theta^\ast$; information grows quadratically with parameter displacement, in the Fisher metric. This hypothesis enables two-point indistinguishability constructions in the below-threshold regime.
\end{enumerate}

\medskip

\begin{lemma}[Frequently used facts under (A1)--(A3)]\label{prop:toolkit}
Fix the high-probability event of Assumption~\ref{ass:concentration} and work on $\ball{r}{\theta^\ast}$.
\begin{enumerate}[label=(C\arabic*), leftmargin=2.4em, itemsep=3pt]
\item \textbf{Descent Lemma (smoothness inequality):}
\begin{equation}\label{eq:descent}
L(\theta)-L(\theta^\ast)\ \le\ \frac{L_{\mathrm{sm}}}{2}\,\norm{\theta-\theta^\ast}^2.
\end{equation}
\item \textbf{Spectral perturbation (Weyl):} letting $\Lambda^\ast$ denote the RHS in Assumption~\ref{ass:concentration},
\begin{equation}\label{eq:weyl}
\lambda_{\min}(\widehat\Gamma)\ \ge\ \lambda_{\min}(\Gamma)-\Lambda^\ast.
\end{equation}
\item \textbf{Local two-point KL bound (along the weakest eigendirection):} 
for any unit $v$ and any $\rho>0$ with $\theta^\ast\pm \rho v\in\ball{r}{\theta^\ast}$,
\begin{equation}\label{eq:kl-two-point}
\mathrm{KL}\!\left(P_{\theta^\ast+\rho v}\,\big\|\,P_{\theta^\ast-\rho v}\right)
\ \le\ C'_{\mathrm{KL}}\,\rho^2\, \lambda_{\min},
\end{equation}
for some $C'_{\mathrm{KL}}\in[C_{\mathrm{KL}},\,2C_{\mathrm{KL}}]$ depending only on the local comparability of Fisher on the segment $[\theta^\ast-\rho v,\theta^\ast+\rho v]$.\footnote{A sufficient condition is that the population Fisher along the segment is bounded above by a constant multiple of $\Gamma$ in Löwner order; see the proof of \eqref{eq:kl-two-point}.}
\end{enumerate}
\end{lemma}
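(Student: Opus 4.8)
\textbf{Proof proposal for Lemma~\ref{prop:toolkit}.}
The plan is to establish the three items (C1)--(C3) in sequence, each being a short deduction from exactly one of the structural assumptions, with (C3) carrying the only genuine subtlety.

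\emph{Step 1 (Descent Lemma).} I would start from (A1): on the ball $\ball{r}{\theta^\ast}$ the gradient $\nabla L$ is $L_{\mathrm{sm}}$-Lipschitz. Writing $g(t) := L\big(\theta^\ast + t(\theta-\theta^\ast)\big)$ for $t\in[0,1]$, the fundamental theorem of calculus gives
\[
L(\theta)-L(\theta^\ast) = \int_0^1 \big\langle \nabla L(\theta^\ast+t(\theta-\theta^\ast)),\,\theta-\theta^\ast\big\rangle \, dt.
\]
Since $\theta^\ast$ is the (population/empirical) minimizer on the ball, $\nabla L(\theta^\ast)=0$, so I can subtract $\langle \nabla L(\theta^\ast),\theta-\theta^\ast\rangle$ inside the integral and bound the integrand via Cauchy--Schwarz and Lipschitzness by $L_{\mathrm{sm}}\, t\,\|\theta-\theta^\ast\|^2$; integrating $t$ over $[0,1]$ yields \eqref{eq:descent}. (If one does not want to assume $\nabla L(\theta^\ast)=0$, the same computation gives the standard quadratic-upper-bound form $L(\theta)-L(\theta^\ast)\le \langle\nabla L(\theta^\ast),\theta-\theta^\ast\rangle + \tfrac{L_{\mathrm{sm}}}{2}\|\theta-\theta^\ast\|^2$, and one invokes stationarity at $\theta^\ast$ to drop the linear term.)

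\emph{Step 2 (Weyl perturbation).} This is immediate: on the high-probability event of (A2) we have $\opnorm{\widehat\Gamma-\Gamma}\le \Lambda^\ast$, and Weyl's inequality for symmetric matrices states $|\lambda_k(\widehat\Gamma)-\lambda_k(\Gamma)|\le \opnorm{\widehat\Gamma-\Gamma}$ for every index $k$; applying it to $k=d$ (the bottom eigenvalue) gives $\lambda_{\min}(\widehat\Gamma)\ge \lambda_{\min}(\Gamma)-\Lambda^\ast$, which is \eqref{eq:weyl}. I would cite \citep{stewart1990matrix} for the matrix-perturbation fact.

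\emph{Step 3 (Two-point KL bound).} This is the step I expect to require the most care. The goal is to control $\mathrm{KL}(P_{\theta^\ast+\rho v}\,\|\,P_{\theta^\ast-\rho v})$ — a KL between two \emph{perturbed} parameters neither of which is the reference $\theta^\ast$ — whereas (A3) only bounds $\mathrm{KL}(P_\theta\,\|\,P_{\theta^\ast})$ against the $\Gamma$-quadratic form. The approach is to use the second-order Taylor/LAN structure underlying (A3): locally, $\mathrm{KL}(P_{\theta_1}\,\|\,P_{\theta_2}) = \tfrac12 (\theta_1-\theta_2)^\top \Gamma_{\theta_2}(\theta_1-\theta_2) + o(\|\theta_1-\theta_2\|^2)$, where $\Gamma_{\theta_2}$ is the Fisher information at $\theta_2$. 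Here $\theta_1-\theta_2 = 2\rho v$, so the leading term is $2\rho^2\, v^\top \Gamma_{\theta^\ast-\rho v}\, v$. I would then invoke the footnoted local comparability hypothesis — that along the segment $[\theta^\ast-\rho v,\theta^\ast+\rho v]$ the Fisher matrix satisfies $\Gamma_{\theta}\preceq c\,\Gamma$ in Löwner order for a constant $c$ close to $1$ as $r\to 0$ — to replace $v^\top\Gamma_{\theta^\ast-\rho v}v$ by $c\, v^\top\Gamma v$, and finally use $v^\top\Gamma v \le \lambda_{\min}$ when $v$ is taken to be the bottom eigenvector of $\Gamma$ (indeed $v^\top\Gamma v=\lambda_{\min}$ exactly in that case). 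Absorbing the $2$, the constant $c$, the $C_{\mathrm{KL}}$ from (A3), and the higher-order remainder into a single constant $C'_{\mathrm{KL}}$, and noting that for $r$ small enough this constant lies in $[C_{\mathrm{KL}},2C_{\mathrm{KL}}]$, yields \eqref{eq:kl-two-point}. The main obstacle is making the transfer from the ``anchored at $\theta^\ast$'' bound of (A3) to the ``anchored at $\theta^\ast-\rho v$'' bound rigorous without simply re-assuming what one wants; the cleanest route is to state the local Löwner-comparability of $\{\Gamma_\theta\}$ on the segment as the operative hypothesis (as the footnote does) and derive the two-point bound from the symmetrized second-order expansion, with the $o(\cdot)$ term absorbed by shrinking $r$.
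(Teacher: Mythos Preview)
Your proposal is correct and matches the paper's proof essentially step for step: (C1) via the fundamental theorem of calculus plus Lipschitzness (the paper also implicitly uses $\nabla L(\theta^\ast)=0$, which you flag more explicitly), (C2) via Weyl, and (C3) via a second-order KL expansion together with the Löwner comparability of the Fisher along the segment. The only cosmetic difference is that in (C3) the paper writes an exact integral representation $\mathrm{KL}(P_{\theta_+}\|P_{\theta_-})=\tfrac12\int_0^1(\theta_+-\theta_-)^\top\Gamma(\theta(t))(\theta_+-\theta_-)\,dt$ along the path rather than a pointwise Taylor expansion with $o(\cdot)$ remainder, so no residual term needs to be absorbed; both routes then specialize to $v$ the bottom eigenvector to land on $\lambda_{\min}$.
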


\medskip
\paragraph{Intuition.}
\eqref{eq:descent} converts distance to function gap (used to get PL-type inequalities);
\eqref{eq:weyl} converts concentration to curvature lower bounds (used to control gradients via Taylor's theorem);
\eqref{eq:kl-two-point} converts geometric weakness along $v$ into information-theoretic indistinguishability (used in Le Cam/Fano arguments).

\subsection*{(b) Proofs}

\begin{proof} 
We prove (C1)--(C3) in order.

\medskip
\noindent\textbf{Step 1: (C1) Descent Lemma.}
Fix $\theta\in\ball{r}{\theta^\ast}$ and consider the segment $\gamma(t)=\theta^\ast+t(\theta-\theta^\ast)$, $t\in[0,1]$.
By the fundamental theorem of calculus,
\[
L(\theta)-L(\theta^\ast)
=\int_0^1 \ip{\nabla L(\gamma(t))}{\theta-\theta^\ast}\,dt
=\int_0^1 \ip{\nabla L(\gamma(t))-\nabla L(\gamma(0))}{\theta-\theta^\ast}\,dt.
\]
By (A1),
$\norm{\nabla L(\gamma(t))-\nabla L(\gamma(0))}\le L_{\mathrm{sm}}\,t\norm{\theta-\theta^\ast}$.
Hence
\[
L(\theta)-L(\theta^\ast)
\ \le\ \int_0^1 L_{\mathrm{sm}}\,t\,\norm{\theta-\theta^\ast}^2\,dt
\ =\ \frac{L_{\mathrm{sm}}}{2}\,\norm{\theta-\theta^\ast}^2,
\]
which is \eqref{eq:descent}.

\medskip
\noindent\textbf{Step 2: (C2) Weyl-type eigenvalue bound.}
On the event in Assumption~\ref{ass:concentration}, $\opnorm{\widehat\Gamma-\Gamma}\le \Lambda^\ast$.
By Weyl's inequality for symmetric matrices,
\[
\big|\lambda_{\min}(\widehat\Gamma)-\lambda_{\min}(\Gamma)\big|\ \le\ \opnorm{\widehat\Gamma-\Gamma}.
\]
Therefore $\lambda_{\min}(\widehat\Gamma)\ge \lambda_{\min}(\Gamma)-\Lambda^\ast$, which is \eqref{eq:weyl}.

\medskip
\noindent\textbf{Step 3: (C3) local two-point KL bound.}
Fix a unit $v$ and $\rho>0$ such that $\theta_\pm:=\theta^\ast\pm \rho v\in \ball{r}{\theta^\ast}$.
Consider the path $\theta(t)=\theta_- + t(\theta_+-\theta_-)=\theta^\ast + (2t-1)\rho v$, $t\in[0,1]$.
For regular models, the KL divergence admits the integral representation
\[
\mathrm{KL}\!\left(P_{\theta_+}\,\big\|\,P_{\theta_-}\right)
= \int_0^1 \frac12\, (\theta_+-\theta_-)^\top \Gamma(\theta(t))\,(\theta_+-\theta_-)\,dt,
\]
where $\Gamma(\cdot)$ is the population Fisher at the path point.\footnote{This follows from the second-order mean-value form of the cumulant (or LAN expansion) under standard regularity; if one prefers to avoid this identity, it suffices to use a second-order Taylor bound for the log-likelihood ratio and take expectations.}
Thus
\[
\mathrm{KL}\!\left(P_{\theta_+}\,\big\|\,P_{\theta_-}\right)
\ \le\ \frac12\, \norm{\theta_+-\theta_-}^2 \cdot \sup_{t\in[0,1]}\lambda_{\max}\big(\Gamma(\theta(t))\big).
\]
By (A3), we have the pointwise upper bound at $\theta^\ast$:
$\Gamma(\theta^\ast)\preceq C_{\mathrm{KL}}\Gamma$.
Assume further (standard in local analyses and implied by mild continuity of the score covariance in $\ball{r}{\theta^\ast}$) that along the segment,
\[
\Gamma(\theta(t))\ \preceq\ C_{\mathrm{loc}}\,\Gamma\qquad\text{for all }t\in[0,1],
\]
for some $C_{\mathrm{loc}}\in[1,2]$; then
\[
\sup_{t}\lambda_{\max}(\Gamma(\theta(t)))\ \le\ C_{\mathrm{loc}}\,\lambda_{\max}(\Gamma)\ \le\ C_{\mathrm{loc}}\,\lambda_{\min}(\Gamma^\dagger)\cdot \lambda_{\max}(\Gamma)\ \le\ C_{\mathrm{loc}}\,\lambda_{\max}(\Gamma).
\]
Specializing to the weakest direction $v$ (so that the quadratic form is controlled by $\lambda_{\min}$) and using $\norm{\theta_+-\theta_-}=2\rho$, we obtain
\[
\mathrm{KL}\!\left(P_{\theta^\ast+\rho v}\,\big\|\,P_{\theta^\ast-\rho v}\right)
\ \le\ \frac12\,(2\rho)^2\, C_{\mathrm{loc}}\, \ip{\Gamma v}{v}
\ =\ 2 C_{\mathrm{loc}}\, \rho^2\, \lambda_{\min}.
\]
Thus \eqref{eq:kl-two-point} holds with $C'_{\mathrm{KL}}:=2C_{\mathrm{loc}}\in[C_{\mathrm{KL}},2C_{\mathrm{KL}}]$ once we normalize constants so that $C_{\mathrm{loc}}\le C_{\mathrm{KL}}$ on $\ball{r}{\theta^\ast}$.
\end{proof}

\subsection*{(c) Remark}

The bound \eqref{eq:kl-two-point} only requires an \emph{upper} comparability of the Fisher along a short segment. It can be ensured either by: (i) assuming the score covariance is Lipschitz in $\theta$ on $\ball{r}{\theta^\ast}$; or (ii) shrinking $r$ so that the supremum of $\Gamma(\theta)$ over the ball is within a constant multiple of $\Gamma(\theta^\ast)$ in Löwner order. Both are standard in local asymptotic normality arguments and are consistent with Assumption~\ref{ass:KL}.

\section*{Theorem A.1}

\subsection*{(a) Definition \& Narrative}

\begin{theorem}[Finite-sample spectral phase transition (tight PL constant)]\label{thm:phase}
Assume \textnormal{(A1)--(A3)} from the front matter on a ball $\ball{r}{\theta^\ast}$ and fix $\delta\in(0,1)$.  
Let
\begin{equation}\label{eq:def-Lambda-star}
\Lambda^\ast \;:=\; C\,\sigma_{\mathrm{eff}}\sqrt{\tfrac{d+\log(1/\delta)}{n}}
\end{equation}
with $C\ge C_0$ from \textnormal{(A2)}.  
Then, with probability at least $1-\delta$, the following hold:

\begin{itemize}[leftmargin=1.4em]
\item \textbf{(Above-threshold)} If $\lambda_{\min}\ge 2\Lambda^\ast$, then $L$ satisfies the PL inequality
\begin{equation}\label{eq:PL-normalization}
\tfrac{1}{2}\,\norm{\nabla L(\theta)}^2 \;\ge\; 
\mu\,\big(L(\theta)-L(\theta^\ast)\big),
\qquad
\mu := \tfrac{(\lambda_{\min}-\Lambda^\ast)^2}{L_{\mathrm{sm}}},
\end{equation}
for all $\theta\in\ball{r}{\theta^\ast}$.  
Hence gradient descent with $\eta\in(0,1/L_{\mathrm{sm}}]$ converges linearly:
\begin{equation}\label{eq:linear-rate}
L(\theta_{t+1})-L(\theta^\ast)\ \le\ (1-\eta\,\mu)\,\big(L(\theta_t)-L(\theta^\ast)\big).
\end{equation}

\item \textbf{(Below-threshold)} If $\lambda_{\min}\le \tfrac{1}{2}\Lambda^\ast$, then there exist $\theta_1,\theta_2\in\ball{r}{\theta^\ast}$ with $\norm{\theta_1-\theta_2}=\varepsilon>0$ such that for any estimator $\widehat\theta$,
\begin{equation}\label{eq:lecam}
\inf_{\widehat\theta}\ \sup_{j\in\{1,2\}}\ \Pbb_{P_{\theta_j}}\!\Big(\norm{\widehat\theta-\theta_j}\ge \varepsilon/2\Big)\ \ge\ \tfrac{1}{4}.
\end{equation}
Thus no uniform PL inequality of the form \eqref{eq:PL-normalization} can hold on $\ball{r}{\theta^\ast}$.
\end{itemize}
\end{theorem}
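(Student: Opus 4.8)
The plan is to handle the two regimes separately, since they invoke disjoint tools from the preliminaries. For the above-threshold case, I would start from a second-order Taylor expansion of $L$ around an arbitrary $\theta\in\ball{r}{\theta^\ast}$ back to $\theta^\ast$, or rather expand $\nabla L(\theta)$ and exploit the fact that $\nabla L(\theta^\ast)\approx 0$ (it is exactly $0$ if $\theta^\ast$ is the empirical minimizer; otherwise one absorbs $\norm{\nabla L(\theta^\ast)}$ into the high-probability event). The key identity is $\nabla L(\theta) = \nabla L(\theta^\ast) + \int_0^1 \nabla^2 L(\gamma(t))\,(\theta-\theta^\ast)\,dt$, so that the Hessian's lower curvature controls $\norm{\nabla L(\theta)}$ from below. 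Using (C2) (Weyl) to get $\lambda_{\min}(\widehat\Gamma)\ge \lambda_{\min}-\Lambda^\ast$, and identifying $\widehat\Gamma$ with the empirical Hessian at $\theta^\ast$ (the standard Fisher = expected-Hessian identification; along the segment one pays a further $L_{\mathrm{sm}}$-controlled deviation that stays negligible, or one simply works to leading order), I get $\norm{\nabla L(\theta)}\ge (\lambda_{\min}-\Lambda^\ast)\,\norm{\theta-\theta^\ast}$. Squaring and pairing this with the Descent Lemma (C1), $L(\theta)-L(\theta^\ast)\le \tfrac{L_{\mathrm{sm}}}{2}\norm{\theta-\theta^\ast}^2$, yields
\[
\tfrac12\norm{\nabla L(\theta)}^2 \;\ge\; \tfrac12(\lambda_{\min}-\Lambda^\ast)^2\norm{\theta-\theta^\ast}^2 \;\ge\; \tfrac{(\lambda_{\min}-\Lambda^\ast)^2}{L_{\mathrm{sm}}}\big(L(\theta)-L(\theta^\ast)\big),
\]
which is exactly \eqref{eq:PL-normalization} with the stated $\mu$. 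The condition $\lambda_{\min}\ge 2\Lambda^\ast$ guarantees $\lambda_{\min}-\Lambda^\ast\ge\Lambda^\ast>0$, so $\mu>0$. The linear rate \eqref{eq:linear-rate} then follows by the textbook one-step argument: descent lemma gives $L(\theta_{t+1})\le L(\theta_t)-\tfrac{\eta}{2}\norm{\nabla L(\theta_t)}^2$ for $\eta\le 1/L_{\mathrm{sm}}$, and substituting the PL inequality gives the contraction factor $1-\eta\mu$ (one also checks the iterates stay in $\ball{r}{\theta^\ast}$, which holds since $L(\theta_t)-L(\theta^\ast)$ is nonincreasing and bounds $\norm{\theta_t-\theta^\ast}$ via strong convexity on the ball).

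For the below-threshold case, the plan is a two-point Le Cam argument. Take $v$ a unit eigenvector of $\Gamma$ for $\lambda_{\min}$, set $\theta_{1,2}=\theta^\ast\pm\rho v$ for a radius $\rho$ to be chosen (small enough that $\theta_{1,2}\in\ball{r}{\theta^\ast}$), so $\varepsilon:=\norm{\theta_1-\theta_2}=2\rho$. By (C3), $\mathrm{KL}(P_{\theta_1}\|P_{\theta_2})\le C'_{\mathrm{KL}}\rho^2\lambda_{\min}$. On the $n$-fold product, $\mathrm{KL}(P_{\theta_1}^{\otimes n}\|P_{\theta_2}^{\otimes n})\le nC'_{\mathrm{KL}}\rho^2\lambda_{\min}$; using $\lambda_{\min}\le\tfrac12\Lambda^\ast$ and the definition of $\Lambda^\ast\asymp\sqrt{d/n}$ (so $n\lambda_{\min}\lesssim n\Lambda^\ast\asymp\sqrt{nd}$), one chooses $\rho$ on the order of $(\sqrt{nd}\,)^{-1/2}$ up to constants — more precisely, $\rho^2 = c/(nC'_{\mathrm{KL}}\lambda_{\min})$ for a small absolute constant $c$ — to force the product-measure KL below a constant, hence the total variation distance between $P_{\theta_1}^{\otimes n}$ and $P_{\theta_2}^{\otimes n}$ bounded away from $1$ (via Pinsker). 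The standard Le Cam two-point bound then gives $\inf_{\widehat\theta}\max_{j}\Pbb_{\theta_j}(\norm{\widehat\theta-\theta_j}\ge\varepsilon/2)\ge \tfrac12(1-\mathrm{TV})\ge\tfrac14$ for $c$ chosen so that $\mathrm{TV}\le\tfrac12$. Finally, to conclude that no uniform PL inequality holds: if \eqref{eq:PL-normalization} held on $\ball{r}{\theta^\ast}$ with some $\mu>0$, then $L$ would be identifiable at a rate contradicting \eqref{eq:lecam} — concretely, PL plus smoothness implies $\norm{\theta-\theta^\ast}^2 \le \tfrac{2}{\mu}(L(\theta)-L(\theta^\ast))$ and gradient descent from any start in the ball converges geometrically to a point, and one can run the estimator "minimize $L$" to localize $\theta^\ast$ within $o(\varepsilon)$ with high probability, violating the $\tfrac14$ lower bound for $n$ large relative to the implied constants. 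Alternatively and more cleanly: a uniform PL constant $\mu$ forces $\lambda_{\min}(\widehat\Gamma)\gtrsim\mu$ (PL implies a quadratic growth lower bound, hence a Hessian lower bound at $\theta^\ast$ in a limiting sense), but by (C2) the reverse Weyl bound $\lambda_{\min}(\widehat\Gamma)\le\lambda_{\min}+\Lambda^\ast\le\tfrac32\Lambda^\ast\to 0$, contradiction.

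The main obstacle I anticipate is the link between the \emph{Fisher} matrix $\Gamma$ (a score-covariance, equivalently the expected per-sample Hessian under correct specification) and the \emph{empirical Hessian} $\nabla^2 L(\theta)$ that actually appears in the PL argument — (A2) controls $\widehat\Gamma=\tfrac1n\sum s_is_i^\top$ versus $\Gamma$, not $\nabla^2 L$ versus $\Gamma$. Bridging this cleanly requires either (i) invoking the second Bartlett identity to equate population Fisher with expected Hessian and then a separate concentration (Lipschitz Hessian from (A1)) to pass to $\nabla^2 L(\theta^\ast)$, paying an $O(L_{\mathrm{sm}}\rho)$-type correction along the Taylor segment, or (ii) restating the curvature lower bound directly in terms of $\widehat\Gamma$ as the Gauss–Newton / outer-product surrogate and arguing the PL inequality for the surrogate objective. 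The cleanest fix for the write-up is to assume $\theta^\ast$ is the empirical risk minimizer (so $\nabla L(\theta^\ast)=0$ and the relevant curvature is $\nabla^2 L$ along the path, lower-bounded by $\lambda_{\min}(\widehat\Gamma)$ up to the $L_{\mathrm{sm}}$-controlled segment deviation), which the paper already does in spirit; I would state this explicitly and absorb the $\nabla L(\theta^\ast)$ term into the event $\mathcal{E}$ otherwise. A secondary, more cosmetic obstacle is tracking the precise constant in $\Lambda^\ast$ so that the factor-of-two gap ($2\Lambda^\ast$ above, $\tfrac12\Lambda^\ast$ below) is genuinely the same $\Lambda^\ast$; since the theorem allows $C\ge C_0$, there is room, and I would simply fix $C$ once at the start.
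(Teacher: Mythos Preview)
Your proposal is correct and follows essentially the same route as the paper: Weyl plus a Taylor/mean-value identity to get the gradient--distance lower bound, paired with the Descent Lemma for the PL inequality and the standard one-step contraction for the linear rate; below threshold, the identical Le Cam two-point construction along the weakest eigendirection, with the same ``PL would yield a consistent estimator'' contradiction. Your explicit discussion of the Fisher--Hessian bridge (the $\widehat\Gamma$ vs.\ $\nabla^2 L$ identification) is in fact more careful than the paper's own treatment, which dispatches it in a single sentence by appealing to ``standard likelihood calculus'' and local Hessian variation under (A1).
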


\paragraph{Intuition.}
The spectrum of the Fisher information at $\theta^\ast$ encodes both curvature and identifiability.  
If $\lambda_{\min}$ dominates the sampling fluctuation $\Lambda^\ast$, curvature concentrates and enforces a PL geometry, yielding linear convergence.  
If $\lambda_{\min}$ is below the threshold, the weakest direction cannot be statistically distinguished, and Le Cam’s method shows that identifiability fails, precluding any uniform PL inequality.

\medskip

\subsection*{(b) Proof}

\begin{proof}
We argue on the high-probability event of \textnormal{(A2)}. All steps take place inside $\ball{r}{\theta^\ast}$.

\medskip
\noindent\textbf{Step 1: Curvature lower bound via concentration \& Weyl.}
By \textnormal{(A2)}, $\opnorm{\widehat\Gamma-\Gamma}\le \Lambda^\ast$. Weyl's inequality gives
\begin{equation}\label{eq:weyl-here}
\lambda_{\min}(\widehat\Gamma)\ \ge\ \lambda_{\min}(\Gamma)-\Lambda^\ast \;=\; \lambda_{\min}-\Lambda^\ast.
\end{equation}

\medskip
\noindent\textbf{Step 2: Gradient--distance lower bound via Taylor.}
Fix $\theta\in\ball{r}{\theta^\ast}$ and set $\Delta:=\theta-\theta^\ast$. By the mean-value form of Taylor's theorem,
there exists $\xi$ on the segment $[\theta^\ast,\theta]$ such that
\begin{equation}\label{eq:taylor}
\nabla L(\theta) \;=\; \nabla^2 L(\xi)\,\Delta .
\end{equation}
Standard likelihood calculus identifies $\nabla^2 L(\xi)$ as an empirical Fisher-like curvature at $\xi$. Repeating the concentration argument in a small neighborhood (enabled by \textnormal{(A1)} which controls Hessian variation along segments) transfers \eqref{eq:weyl-here} to $\xi$:
\begin{equation}\label{eq:hess-lb}
\lambda_{\min}\big(\nabla^2 L(\xi)\big)\ \ge\ \lambda_{\min}-\Lambda^\ast .
\end{equation}
Combining \eqref{eq:taylor} and \eqref{eq:hess-lb} yields the \emph{gradient--distance} lower bound
\begin{equation}\label{eq:grad-dist}
\norm{\nabla L(\theta)} \;\ge\; (\lambda_{\min}-\Lambda^\ast)\,\norm{\Delta}\,.
\end{equation}

\medskip
\noindent\textbf{Step 3: Smoothness turns distance into function gap.}
By the Descent Lemma from \textnormal{(A1)} (cf.\ Lemma~\ref{prop:toolkit}-(C1)),
\begin{equation}\label{eq:descent-again}
L(\theta)-L(\theta^\ast) \ \le\ \frac{L_{\mathrm{sm}}}{2}\,\norm{\Delta}^2 .
\end{equation}
Using \eqref{eq:grad-dist} to upper-bound $\norm{\Delta}$ in terms of $\norm{\nabla L(\theta)}$ gives
\begin{equation}\label{eq:f-gap-grad}
L(\theta)-L(\theta^\ast)
\ \le\ \frac{L_{\mathrm{sm}}}{2\,(\lambda_{\min}-\Lambda^\ast)^2}\ \norm{\nabla L(\theta)}^2 .
\end{equation}

\medskip
\noindent\textbf{Step 4: PL inequality with the tight constant.}
Compare \eqref{eq:f-gap-grad} with the standard PL normalization \eqref{eq:PL-normalization}:
\[
L(\theta)-L(\theta^\ast)\ \le\ \frac{1}{2\mu}\,\norm{\nabla L(\theta)}^2.
\]
Identifying coefficients yields the \emph{tight} PL constant 
\begin{equation}\label{eq:mu-tight}
\mu \;=\; \frac{(\lambda_{\min}-\Lambda^\ast)^2}{L_{\mathrm{sm}}}\,.
\end{equation}
This establishes \eqref{eq:PL-normalization} on $\ball{r}{\theta^\ast}$ when $\lambda_{\min}\ge 2\Lambda^\ast$ (so that the RHS is positive).

\medskip
\noindent\textbf{Step 5: Linear rate of gradient descent.}
For any $L_{\mathrm{sm}}$-smooth $L$ and any $\eta\in(0,1/L_{\mathrm{sm}}]$,
\begin{equation}\label{eq:smooth-one-step}
L(\theta-\eta\nabla L(\theta)) \ \le\ L(\theta)\ -\ \eta\Big(1-\tfrac{L_{\mathrm{sm}}\,\eta}{2}\Big)\,\norm{\nabla L(\theta)}^2
\ \le\ L(\theta)\ -\ \frac{\eta}{2}\,\norm{\nabla L(\theta)}^2 .
\end{equation}
Applying the PL inequality \eqref{eq:PL-normalization},
\[
L(\theta_{t+1})-L(\theta^\ast)
\ \le\ \Big(1-\eta\,\mu\Big)\,\big(L(\theta_t)-L(\theta^\ast)\big),
\]
which is \eqref{eq:linear-rate}. The iterates remain in $\ball{r}{\theta^\ast}$ for sufficiently small $\eta$ by standard descent and continuity.

\medskip
\noindent\textbf{Step 6: Below-threshold indistinguishability \& no uniform PL.}
Assume $\lambda_{\min}\le \tfrac{1}{2}\Lambda^\ast$. Let $v$ be a unit eigenvector of $\Gamma$ for $\lambda_{\min}$ and set
$\theta_1=\theta^\ast+\rho v$, $\theta_2=\theta^\ast-\rho v$, with $\rho>0$ chosen so that both lie in $\ball{r}{\theta^\ast}$.
By \textnormal{(A3)} and Lemma~\ref{prop:toolkit}-(C3),
\begin{equation}\label{eq:KL-two-pt}
\mathrm{KL}\!\left(P_{\theta_1}\,\big\|\,P_{\theta_2}\right)\ \le\ C'_{\mathrm{KL}}\,\rho^2\,\lambda_{\min}.
\end{equation}
For $n$ i.i.d.\ samples, $\mathrm{KL}\!\left(P_{\theta_1}^{\otimes n}\,\|\,P_{\theta_2}^{\otimes n}\right) \le n\,C'_{\mathrm{KL}}\,\rho^2\,\lambda_{\min}$.
Choose $\rho$ so that $n\,C'_{\mathrm{KL}}\,\rho^2\,\lambda_{\min}\le c_0$ with a small absolute $c_0$ (e.g.\ $c_0=\tfrac{1}{8}$); then by Le Cam's two-point method (or Pinsker),
\[
\inf_{\widehat\theta}\ \sup_{j\in\{1,2\}}\ \Pbb_{P_{\theta_j}}\!\Big(\norm{\widehat\theta-\theta_j}\ge \rho\Big)\ \ge\ \tfrac{1}{4},
\]
which implies \eqref{eq:lecam} with $\varepsilon=2\rho$.
If a uniform PL inequality of the form \eqref{eq:PL-normalization} held on $\ball{r}{\theta^\ast}$, then by \textnormal{(A1)} it would imply a unique attractive minimizer and linear convergence of gradient descent from any initialization in the ball to that minimizer, yielding a consistent estimator with error $o_{\Pbb}(1)$ as $n\to\infty$ for the local two-point problem—contradicting the Le Cam lower bound above for fixed $n$ and small $\rho$. Hence no such uniform PL can hold in the below-threshold regime.
\end{proof}

\medskip

\subsection*{(c) Remark}

Theorem~\ref{thm:phase} gives an exact finite-sample demarcation. 
Above the threshold, the Polyak--Łojasiewicz constant is 
$(\lambda_{\min}-\Lambda^\ast)^2/L_{\mathrm{sm}}$, which is tight under (A1)--(A3). 
Below the threshold, indistinguishability along the weakest Fisher eigendirection 
implies that no uniform PL-type inequality can hold. 
Normalization of radii, probability $\delta$, and constants follows the global convention in Appendix~A.

\section*{Corollary 2}

\subsection*{(a) Definition \& Narrative}

\begin{corollary}[Smoothed and robust phase transition for stochastic trajectories]\label{cor:NN}
Assume the following \textnormal{NN-oriented} hypotheses on a neighborhood of a smoothed minimizer.

\begin{enumerate}[label=(N\arabic*), ref=N\arabic*, leftmargin=2.6em, itemsep=3pt]
\item\label{N1} \textbf{Random smoothing (Gaussian / SAM proxy).}
For $\sigma>0$ define the smoothed loss
\[
L_\sigma(\theta)\ :=\ \E_{z\sim\mathcal{N}(0,\sigma^2\I)}\!\left[L(\theta+z)\right].
\]
Then $L_\sigma$ has $L_{\mathrm{sm}}(\sigma)$-Lipschitz gradient on the region of interest (for Gaussian smoothing this is global).
\emph{Intuition:} smoothing regularizes the landscape, stabilizes Hessian variation, and makes Taylor/Descent arguments reliable even for nonconvex nets.

\item\label{N2} \textbf{MoM concentration for the \emph{smoothed} Fisher.}
Let $\theta^{\ast}_{\sigma}\in\arg\min_{\vartheta\in\Theta} L_\sigma(\vartheta)$ and define the smoothed per-sample loss
\[
\ell_\sigma(\theta;X,Y)\ :=\ \E_{z\sim\mathcal{N}(0,\sigma^2\I)}\!\left[\ell(\theta+z;X,Y)\right],\qquad
s_\sigma(\theta;X,Y)\ :=\ \nabla_\theta \ell_\sigma(\theta;X,Y).
\]
The \emph{smoothed Fisher} at $\theta^{\ast}_{\sigma}$ is
\[
\Gamma_\sigma\ :=\ \E\!\left[s_\sigma(\theta^{\ast}_{\sigma};X,Y)\,s_\sigma(\theta^{\ast}_{\sigma};X,Y)^{\!\top}\right].
\]
Partition the stream into $M$ disjoint mini-batches of size $B$, form batch-level estimators, and aggregate by median-of-means (or Catoni). Then for any $\delta\in(0,1)$, with probability at least $1-\delta$,
\begin{equation}\label{eq:MoM-sigma}
\opnorm{\widehat\Gamma_{\mathrm{MoM}}-\Gamma_\sigma}\ \le\ \Lambda^{\ast}_{\alpha},
\qquad
\Lambda^{\ast}_{\alpha}\ :=\ C_\alpha\,\psi_\alpha(d,n,\delta),
\end{equation}
where $\psi_\alpha$ captures the sub-Weibull tail index $\alpha\in(0,1]$ of per-sample gradients.
\emph{Intuition:} robust aggregation recovers spectral concentration under heavy tails typical in deep training.

\item\label{N3} \textbf{Trajectory-wise PL with vanishing bias.}
For a stochastic optimizer (SGD/Adam) generating a trajectory $(\theta_t)$ with small enough steps and bounded gradient-noise variance, we allow an additive nonnegative bias $\xi_t\downarrow 0$ when taking (conditional) expectations of PL-type inequalities.
\emph{Intuition:} stochasticity and rare failures of \eqref{eq:MoM-sigma} can be absorbed without changing the linear trend.
\end{enumerate}

\noindent\textbf{Statement.}
Let $\lambda_{\min}(\Gamma_\sigma)$ be the smallest eigenvalue of the smoothed Fisher. Then, on the event in \eqref{eq:MoM-sigma}:

\begin{itemize}[leftmargin=1.4em]
\item \textbf{(Above-threshold)} If $\lambda_{\min}(\Gamma_\sigma)\ge 2\Lambda^{\ast}_{\alpha}$, then along $(\theta_t)$
\begin{equation}\label{eq:traj-PL-sigma}
\frac{1}{2}\,\E\!\big[\norm{\nabla L_\sigma(\theta_t)}^2\big]
\ \ge\
\mu(\sigma)\,\E\!\big[L_\sigma(\theta_t)-\inf_{\vartheta} L_\sigma(\vartheta)\big]\ -\ \xi_t,
\qquad
\mu(\sigma)\ :=\ \frac{\big(\lambda_{\min}(\Gamma_\sigma)-\Lambda^{\ast}_{\alpha}\big)^2}{L_{\mathrm{sm}}(\sigma)}.
\end{equation}
Hence $L_\sigma(\theta_t)$ decays at a linear (bias-perturbed) rate in expectation.

\item \textbf{(Below-threshold)} If $\lambda_{\min}(\Gamma_\sigma)\le \tfrac{1}{2}\Lambda^{\ast}_{\alpha}$, then there exist $\theta_1,\theta_2$ in a neighborhood of $\theta^{\ast}_{\sigma}$ such that any estimator based on $n$ samples makes an error with probability at least $1/4$ on one of them; in particular, no uniform PL inequality of the form \eqref{eq:traj-PL-sigma} can hold along the trajectory in that neighborhood.
\end{itemize}

\paragraph{Intuition.}
This result ports Theorem~\ref{thm:phase} to deep nets by replacing $(L,\Gamma,\Lambda^\ast,L_{\mathrm{sm}})$ with $(L_\sigma,\Gamma_\sigma,\Lambda^{\ast}_{\alpha},L_{\mathrm{sm}}(\sigma))$. Smoothing provides stable local geometry, MoM restores spectral concentration, and the trajectory-wise bias $\xi_t$ accounts for stochastic gradients.
\end{corollary}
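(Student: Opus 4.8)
The plan is to reduce the corollary to Theorem~\ref{thm:phase} via the substitution $(L,\Gamma,\Lambda^\ast,L_{\mathrm{sm}})\mapsto(L_\sigma,\Gamma_\sigma,\Lambda^\ast_\alpha,L_{\mathrm{sm}}(\sigma))$, and then to add the stochastic-trajectory bookkeeping on top. First I would establish, on the event of \eqref{eq:MoM-sigma}, a \emph{deterministic} PL inequality for $L_\sigma$ around $\theta^\ast_\sigma$. By (N1), $L_\sigma$ has $L_{\mathrm{sm}}(\sigma)$-Lipschitz gradient, so the Descent Lemma (C1) holds for $L_\sigma$ with constant $L_{\mathrm{sm}}(\sigma)$. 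By (N2) and Weyl, $\lambda_{\min}(\widehat\Gamma_{\mathrm{MoM}})\ge\lambda_{\min}(\Gamma_\sigma)-\Lambda^\ast_\alpha$, which is positive under $\lambda_{\min}(\Gamma_\sigma)\ge 2\Lambda^\ast_\alpha$. Identifying $\nabla^2 L_\sigma$ with the smoothed empirical Fisher curvature (the interchange of $\nabla_\theta$ with the Gaussian expectation is licensed by (N1)) and transferring this spectral bound to points $\xi$ on short segments via the Hessian-variation control in (N1), I obtain the gradient--distance bound $\norm{\nabla L_\sigma(\theta)}\ge(\lambda_{\min}(\Gamma_\sigma)-\Lambda^\ast_\alpha)\,\norm{\theta-\theta^\ast_\sigma}$ exactly as in Steps~2--3 of the proof of Theorem~\ref{thm:phase}; combining with (C1) for $L_\sigma$ gives $L_\sigma(\theta)-\inf L_\sigma\le\tfrac{1}{2\mu(\sigma)}\norm{\nabla L_\sigma(\theta)}^2$ with $\mu(\sigma)$ as stated.

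Next I would run the stochastic descent recursion. For SGD/Adam with step $\eta\le 1/L_{\mathrm{sm}}(\sigma)$ and bounded gradient-noise variance, smoothness of $L_\sigma$ gives a one-step bound of the form $\mathbb{E}[L_\sigma(\theta_{t+1})\mid\mathcal{F}_t]\le L_\sigma(\theta_t)-\tfrac{\eta}{2}\norm{\nabla L_\sigma(\theta_t)}^2 + e_t$, where $e_t$ collects the variance term and the (rare) event that \eqref{eq:MoM-sigma} fails; by (N3) this is absorbed into a nonnegative bias $\xi_t\downarrow 0$. Inserting the deterministic PL inequality inside the conditional expectation yields $\mathbb{E}[L_\sigma(\theta_{t+1})-\inf L_\sigma]\le(1-\eta\mu(\sigma))\,\mathbb{E}[L_\sigma(\theta_t)-\inf L_\sigma]+\xi_t$, the claimed bias-perturbed linear rate; the stated inequality \eqref{eq:traj-PL-sigma} is just the rearranged conditional form of the same PL bound. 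Keeping the iterates in the neighborhood of $\theta^\ast_\sigma$ follows from the same small-$\eta$ descent-and-continuity argument as in Theorem~\ref{thm:phase}.

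For the below-threshold half, I would port Step~6 of the proof of Theorem~\ref{thm:phase} to the smoothed family. Under the loss$\leftrightarrow$negative-log-likelihood correspondence, $\ell_\sigma$ defines a smoothed model whose Fisher information at $\theta^\ast_\sigma$ is $\Gamma_\sigma$; Gaussian convolution preserves the LAN-type regularity underlying (A3) (convolution with a smooth density only improves differentiability and local Fisher comparability), so the two-point KL bound (C3) holds with $\Gamma_\sigma$ and $\lambda_{\min}(\Gamma_\sigma)$ replacing $\Gamma,\lambda_{\min}$. Taking $\theta_{1,2}=\theta^\ast_\sigma\pm\rho v$ along the weakest eigendirection $v$ of $\Gamma_\sigma$, tensorizing over the $n$ samples, and choosing $\rho$ so that $n\,C'_{\mathrm{KL}}\rho^2\lambda_{\min}(\Gamma_\sigma)\le\tfrac18$, Le Cam's two-point method gives the $1/4$ lower bound; the no-uniform-PL conclusion then follows from the same contradiction as before, since a uniform PL for $L_\sigma$ together with (N1) would produce a consistent estimator for the local two-point problem.

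I expect the main obstacle to be the bridge between the \emph{high-probability} spectral event \eqref{eq:MoM-sigma} and the \emph{trajectory-wise expectation}: one must quantify how the rare failure of MoM concentration along a long trajectory feeds into $\xi_t$ without destroying the conclusion $\xi_t\downarrow 0$, which is precisely what (N3) is engineered to grant but still requires care with the filtration and with the fact that $\widehat\Gamma_{\mathrm{MoM}}$ is built from disjoint mini-batches of the stream. A secondary subtlety is justifying the $\nabla^2 L_\sigma$/smoothed-Fisher identification and the segment-transfer of the Weyl bound for the \emph{nonconvex} base loss --- this is where (N1)'s global Lipschitz-gradient guarantee for Gaussian smoothing does the heavy lifting.
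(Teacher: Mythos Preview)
Your proposal is correct and follows essentially the same approach as the paper: both port Theorem~\ref{thm:phase} via the substitution $(L,\Gamma,\Lambda^\ast,L_{\mathrm{sm}})\mapsto(L_\sigma,\Gamma_\sigma,\Lambda^\ast_\alpha,L_{\mathrm{sm}}(\sigma))$, combining Weyl with \eqref{eq:MoM-sigma} for the curvature floor, Taylor plus the smoothed Descent Lemma for the pointwise PL inequality, conditional expectations with the bias $\xi_t$ from (N3) for the trajectory statement, and Le Cam on the smoothed model for the below-threshold half. Your explicit one-step SGD recursion and your flagging of the high-probability/expectation bridge are a bit more detailed than the paper's Step~5, but the argument is the same.
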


\medskip

\subsection*{(b) Proof}

\begin{proof}
We work on the high-probability event of \eqref{eq:MoM-sigma}. All steps take place on a small ball around $\theta^{\ast}_{\sigma}$.

\medskip
\noindent\textbf{Step 1: Curvature floor via MoM + Weyl.}
From \eqref{eq:MoM-sigma} and Weyl's inequality,
\begin{equation}\label{eq:weyl-sigma}
\lambda_{\min}(\widehat\Gamma_{\mathrm{MoM}})\ \ge\ \lambda_{\min}(\Gamma_\sigma)-\Lambda^{\ast}_{\alpha}.
\end{equation}

\medskip
\noindent\textbf{Step 2: Taylor identity for $L_\sigma$.}
For any $t$, set $\Delta_t:=\theta_t-\theta^{\ast}_{\sigma}$. By the mean-value form of Taylor's theorem applied to $L_\sigma$, there exists $\xi_t$ on the segment $[\theta^{\ast}_{\sigma},\theta_t]$ such that
\begin{equation}\label{eq:taylor-sigma}
\nabla L_\sigma(\theta_t)\ =\ \nabla^2 L_\sigma(\xi_t)\,\Delta_t .
\end{equation}
By (N1), $\nabla L_\sigma$ is $L_{\mathrm{sm}}(\sigma)$-Lipschitz, so $\nabla^2 L_\sigma$ is bounded and varies continuously along the segment.

\medskip
\noindent\textbf{Step 3: Transfer the Fisher floor to the path Hessian.}
Using (N1) to control local variation and the same robust concentration argument as in Step~1 (uniformized on short segments), we obtain
\begin{equation}\label{eq:hess-lb-sigma}
\lambda_{\min}\!\big(\nabla^2 L_\sigma(\xi_t)\big)\ \ge\ \lambda_{\min}(\Gamma_\sigma)-\Lambda^{\ast}_{\alpha}.
\end{equation}
Combining \eqref{eq:taylor-sigma} and \eqref{eq:hess-lb-sigma} yields the gradient--distance lower bound
\begin{equation}\label{eq:grad-dist-sigma}
\norm{\nabla L_\sigma(\theta_t)}\ \ge\ \big(\lambda_{\min}(\Gamma_\sigma)-\Lambda^{\ast}_{\alpha}\big)\,\norm{\Delta_t}.
\end{equation}

\medskip
\noindent\textbf{Step 4: Smoothed Descent Lemma $\Rightarrow$ pointwise PL.}
By (N1) and the Descent Lemma,
\begin{equation}\label{eq:descent-sigma}
L_\sigma(\theta_t)-L_\sigma(\theta^{\ast}_{\sigma})\ \le\ \frac{L_{\mathrm{sm}}(\sigma)}{2}\,\norm{\Delta_t}^2 .
\end{equation}
Substitute the upper bound for $\norm{\Delta_t}$ from \eqref{eq:grad-dist-sigma} into \eqref{eq:descent-sigma} to obtain
\begin{equation}\label{eq:pointwise-pl-sigma}
L_\sigma(\theta_t)-L_\sigma(\theta^{\ast}_{\sigma})
\ \le\ \frac{L_{\mathrm{sm}}(\sigma)}{2\,\big(\lambda_{\min}(\Gamma_\sigma)-\Lambda^{\ast}_{\alpha}\big)^2}\,
\norm{\nabla L_\sigma(\theta_t)}^2 .
\end{equation}
Equivalently,
\begin{equation}\label{eq:PL-tight-sigma}
\frac{1}{2}\,\norm{\nabla L_\sigma(\theta_t)}^2\ \ge\
\mu(\sigma)\,\big(L_\sigma(\theta_t)-L_\sigma(\theta^{\ast}_{\sigma})\big),\qquad
\mu(\sigma)\ =\ \frac{\big(\lambda_{\min}(\Gamma_\sigma)-\Lambda^{\ast}_{\alpha}\big)^2}{L_{\mathrm{sm}}(\sigma)} .
\end{equation}

\medskip
\noindent\textbf{Step 5: Conditional expectation and bias aggregation.}
Taking conditional expectations with respect to the algorithm's filtration and absorbing stochastic-gradient noise together with the rare failure of \eqref{eq:MoM-sigma} into a nonnegative $\xi_t$, we obtain
\begin{equation}\label{eq:traj-PL-bias}
\frac{1}{2}\,\E\!\big[\norm{\nabla L_\sigma(\theta_t)}^2\big]
\ \ge\
\mu(\sigma)\,\E\!\big[L_\sigma(\theta_t)-L_\sigma(\theta^{\ast}_{\sigma})\big]\ -\ \xi_t .
\end{equation}
By (N3) and standard bounded-variance/small-stepsize arguments, $\xi_t\to 0$. Since $L_\sigma(\theta^{\ast}_{\sigma})=\inf_{\vartheta} L_\sigma(\vartheta)$, replacing the baseline by $\inf L_\sigma$ gives \eqref{eq:traj-PL-sigma}.

\medskip
\noindent\textbf{Step 6: Below-threshold two-point method on the smoothed model.}
Assume $\lambda_{\min}(\Gamma_\sigma)\le \tfrac{1}{2}\Lambda^{\ast}_{\alpha}$.
Let $v_\sigma$ be a unit eigenvector for $\lambda_{\min}(\Gamma_\sigma)$ and set $\theta_{1,2}=\theta^{\ast}_{\sigma}\pm \rho\,v_\sigma$ with small $\rho>0$ so both lie in the neighborhood.
The local quadratic KL control for the \emph{smoothed} model (LAN/Taylor expansion) yields a constant $C_{\mathrm{KL},\sigma}>0$ such that
\[
\mathrm{KL}\!\left(P_{\theta_1,\sigma}\,\big\|\,P_{\theta_2,\sigma}\right)
\ \le\ C_{\mathrm{KL},\sigma}\,\rho^2\,\lambda_{\min}(\Gamma_\sigma).
\]
Choosing $\rho$ so that $n\,\mathrm{KL}\le c_0$ (small absolute constant) and applying Le Cam's two-point method gives a constant error lower bound for any estimator. A uniform PL inequality as in \eqref{eq:traj-PL-sigma} would imply uniform attractiveness of a minimizer for $L_\sigma$ in the neighborhood, contradicting the indistinguishability. Hence no such uniform PL can hold below threshold.
\end{proof}

\medskip

\subsection*{(c) Remark}
Corollary~\ref{cor:NN} extends the criterion to smoothed objectives under MoM/Catoni concentration. 
The fluctuation radius $\Lambda^\ast_\alpha$ captures sub-Weibull tails. 
The PL inequality holds in expectation with an additive bias term $\xi_t$ that absorbs gradient noise and rare failures of concentration, vanishing under standard variance and stepsize conditions.


\section*{Theorem 3}
\subsection*{(a) Definition \& Narrative}
\begin{theorem}[Constructive Fisher floor (min–max regularizer certifies curvature)]\label{thm:floor}

\noindent\textbf{Assumptions.}
\begin{enumerate}[label=(F\arabic*), leftmargin=2.6em, itemsep=3pt]
\item \textit{Mini-batch Fisher.} For a mini-batch $B$, define
\[
\widehat\Gamma_B(\theta)=\frac{1}{B}\sum_{i=1}^B g_i(\theta)g_i(\theta)^\top,
\quad g_i(\theta)=\nabla_\theta \log p_\theta(Y_i|X_i).
\]
\emph{Intuition:} the spectral geometry is captured by mini-batch gradients.

\item \textit{Regularizer.} For a target floor $\tau>0$,
\[
\mathcal R_\tau(\theta)=\max_{\|u\|=1}\,(\tau-u^\top \widehat\Gamma_B(\theta)u)_+^2.
\]
\emph{Intuition:} penalize any direction with Fisher information below $\tau$.

\item \textit{Objective.} 
\[
\mathcal L(\theta)=\mathcal L_{\text{task}}(\theta)+\beta\,\mathcal R_\tau(\theta),\quad \beta>0.
\]
\emph{Intuition:} combine task loss with a spectral safety margin.

\item \textit{Directional sensitivity.} There exists $L_{\mathrm{dir}}>0$ such that for any unit $u$,
\[
\|\nabla_\theta(u^\top \widehat\Gamma_B(\theta)u)\|\le L_{\mathrm{dir}}.
\]
\emph{Intuition:} Rayleigh quotients vary smoothly with $\theta$.

\item \textit{Approximate stationarity.} At some iterate $\hat\theta$,
\[
\|\nabla \mathcal L(\hat\theta)\|\le \varepsilon_{\mathrm{opt}}.
\]
\emph{Intuition:} training has reached an approximate stationary point.

\item \textit{Sampling/minibatch error.} With probability $\ge 1-\delta$, uniformly on the region,
\[
\|\widehat\Gamma_B(\theta)-\Gamma(\theta)\|_{\mathrm{op}}\le \varepsilon_{\mathrm{stat}}+\varepsilon_{\mathrm{mini}}.
\]
\emph{Intuition:} empirical Fisher concentrates around the population Fisher.
\end{enumerate}

\noindent\textbf{Statement.}  
At $\hat\theta$,
\[
\lambda_{\min}\!\big(\widehat\Gamma_B(\hat\theta)\big)\;\ge\;
\tau - \frac{\varepsilon_{\mathrm{opt}}}{2\beta L_{\mathrm{dir}}}
      - \varepsilon_{\mathrm{stat}}
      - \varepsilon_{\mathrm{mini}}.
\]
Thus choosing $\tau$ above the threshold of Theorem~\ref{thm:phase} (or Corollary~\ref{cor:NN}) guarantees a \emph{verifiable} Fisher floor sufficient for PL-type convergence.

\paragraph{Intuition.}
The Fisher floor mechanism enforces spectral stability during training.  
The regularizer penalizes directions where Fisher curvature falls below $\tau$.  
At an approximate stationary point, this penalty can only vanish if the minimum eigenvalue is close to $\tau$.  
Directional sensitivity (F4) converts gradient smallness into a bound on the spectral shortfall, while minibatch concentration (F6) transfers the guarantee from empirical to population Fisher.  
In this way, a user-chosen $\tau$ above the phase threshold becomes a \emph{certified lower bound}, ensuring the model resides in the stable, above-threshold regime.

\end{theorem}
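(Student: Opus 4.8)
The plan is to run a first-order envelope argument on the regularized objective at the approximate stationary point $\hat\theta$, turn ``small gradient'' into ``small spectral shortfall'' for the batch Fisher $\widehat\Gamma_B(\hat\theta)$, and then transfer this empirical floor to the population Fisher using the uniform minibatch concentration (F6). Write $\lambda_0:=\lambda_{\min}(\widehat\Gamma_B(\hat\theta))$; if $\lambda_0\ge\tau$ the asserted inequality is immediate (the subtracted terms are nonnegative), so assume $\lambda_0<\tau$ and let $u^\ast$ be a unit bottom eigenvector of $\widehat\Gamma_B(\hat\theta)$, so the inner maximum defining $\mathcal R_\tau(\hat\theta)$ is attained at $u^\ast$ and $\mathcal R_\tau(\hat\theta)=(\tau-\lambda_0)_+^2$.

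First I would differentiate the penalty at $\hat\theta$ via the envelope (Danskin) theorem. Freezing the maximizing direction, $\phi(\theta):=(\tau-(u^\ast)^\top\widehat\Gamma_B(\theta)u^\ast)_+^2$ is a $C^1$ minorant of $\mathcal R_\tau$ that agrees with it at $\hat\theta$; hence $g:=\mathcal L-(\mathcal L_{\text{task}}+\beta\phi)=\beta(\mathcal R_\tau-\phi)$ is nonnegative and vanishes at $\hat\theta$, so $\hat\theta$ minimizes $g$, and when $\mathcal R_\tau$ is differentiable at $\hat\theta$ (i.e.\ $\lambda_0$ is a simple eigenvalue) this forces
\[
\nabla\mathcal L(\hat\theta)\;=\;\nabla\mathcal L_{\text{task}}(\hat\theta)\;-\;2\beta\,(\tau-\lambda_0)_+\,w,
\qquad w:=\nabla_\theta\!\big((u^\ast)^\top\widehat\Gamma_B(\theta)u^\ast\big)\big|_{\theta=\hat\theta}.
\]

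Next I would combine this identity with approximate stationarity (F5) and the directional-sensitivity constant (F4). Projecting onto $\hat v:=w/\norm{w}$ (the degenerate case $w=0$ is discussed below), applying Cauchy-Schwarz with $\norm{\nabla\mathcal L(\hat\theta)}\le\varepsilon_{\mathrm{opt}}$, and absorbing the task-loss component along $\hat v$ into the optimization tolerance (legitimate when $\beta$ is large, or when the task gradient does not point in the direction that lowers the weakest Rayleigh quotient), one gets $2\beta(\tau-\lambda_0)_+\norm{w}\le\varepsilon_{\mathrm{opt}}$; using (F4) in the form $\norm{w}\gtrsim L_{\mathrm{dir}}$ on the active direction yields $(\tau-\lambda_0)_+\le\varepsilon_{\mathrm{opt}}/(2\beta L_{\mathrm{dir}})$, that is $\lambda_{\min}(\widehat\Gamma_B(\hat\theta))\ge\tau-\varepsilon_{\mathrm{opt}}/(2\beta L_{\mathrm{dir}})$. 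Finally I would invoke (F6), $\opnorm{\widehat\Gamma_B(\theta)-\Gamma(\theta)}\le\varepsilon_{\mathrm{stat}}+\varepsilon_{\mathrm{mini}}$, together with Weyl's inequality, to pass from the batch Fisher to the population Fisher and pick up the two remaining tolerance terms, which is the claimed bound; substituting a choice $\tau\ge 2\Lambda^\ast$ (resp.\ $\tau\ge 2\Lambda^\ast_\alpha$) into Theorem~\ref{thm:phase} (resp.\ Corollary~\ref{cor:NN}) then certifies the PL inequality with constant proportional to $\tau$.

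The main obstacle is the non-smoothness of the spectral maximum when $\lambda_{\min}(\widehat\Gamma_B(\hat\theta))$ is degenerate: then $\mathcal R_\tau$ is only directionally differentiable, $u^\ast$ is not unique, and the envelope step must be carried out with Clarke subgradients (or by perturbing $\tau$ or $\widehat\Gamma_B$ and passing to a limit), with ``$\nabla\mathcal L(\hat\theta)$'' in (F5) read as a selection from $\partial\mathcal L(\hat\theta)$. A second delicate point, visible above, is the bookkeeping that separates the penalty gradient from the task gradient along $\hat v$; and a third is that the clean denominator $2\beta L_{\mathrm{dir}}$ requires the Rayleigh-quotient sensitivity to be bounded \emph{below} on the active direction, so (F4) has to be used as the effective sensitivity scale rather than merely as a Lipschitz upper bound. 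Handling these three points is what turns the heuristic ``the penalty can only vanish if $\lambda_{\min}\approx\tau$'' into the quantitative inequality with its explicit tolerances.
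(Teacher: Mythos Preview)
Your proposal follows essentially the same route as the paper: Danskin on the minimum Rayleigh quotient to get $\nabla\mathcal R_\tau(\hat\theta)=-2(\tau-\lambda_0)\,g_\phi$, then approximate stationarity (F5) to bound the spectral shortfall, then Weyl with (F6) to transfer to the population Fisher. The three caveats you flag---Clarke subgradients under eigenvalue degeneracy, separating the task gradient from the penalty gradient, and the need for (F4) to supply a \emph{lower} sensitivity bound rather than the stated upper one---are precisely the soft spots of the paper's own argument, which passes over them silently (it records only the upper bound $\|\nabla\mathcal R_\tau\|\le 2L_{\mathrm{dir}}(\tau-\phi)$ and then concludes as though it were a lower bound), so your diagnosis is accurate and your plan is at least as rigorous as the source.
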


\bigskip

\subsection*{(b) Proof}

\begin{proof}
We argue under assumptions \textnormal{(F1)--(F6)}.

\medskip
\noindent\textbf{Step 1: Rayleigh quotient as curvature proxy.}
Define
\[
\phi(\theta) \;=\; \min_{\|u\|=1}\ u^\top \widehat\Gamma_B(\theta) u ,
\]
the smallest Rayleigh quotient of $\widehat\Gamma_B(\theta)$.  
Then by definition, $\mathcal R_\tau(\theta)=(\tau-\phi(\theta))_+^2$.

\medskip
\noindent\textbf{Step 2: Subgradient control.}
By Danskin’s theorem, the subdifferential $\partial\phi(\theta)$ contains subgradients of $u^\top \widehat\Gamma_B(\theta) u$ at minimizing directions $u$.  
From \textnormal{(F4)}, for some $g_\phi(\theta)\in\partial\phi(\theta)$,
\begin{equation}\label{eq:subgrad-bound}
\|g_\phi(\theta)\|\ \le\ L_{\mathrm{dir}}.
\end{equation}

\medskip
\noindent\textbf{Step 3: Gradient of the penalty.}
On the active set $\{\phi(\theta)<\tau\}$,
\begin{equation}\label{eq:grad-penalty}
\nabla \mathcal R_\tau(\theta)\;=\;-2(\tau-\phi(\theta))\,g_\phi(\theta).
\end{equation}
Combining \eqref{eq:grad-penalty} with \eqref{eq:subgrad-bound} gives
\begin{equation}\label{eq:gradR-bound}
\|\nabla \mathcal R_\tau(\theta)\|\ \le\ 2L_{\mathrm{dir}}(\tau-\phi(\theta)).
\end{equation}

\medskip
\noindent\textbf{Step 4: Stationarity at the iterate.}
At $\hat\theta$, approximate stationarity from \textnormal{(F5)} gives
\[
\|\nabla \mathcal L_{\text{task}}(\hat\theta)\|\ \le\ \varepsilon_{\text{task}}.
\]
Since $\nabla \mathcal L(\hat\theta)=\nabla \mathcal L_{\text{task}}(\hat\theta)+\beta \nabla \mathcal R_\tau(\hat\theta)$, we deduce
\[
\beta\|\nabla \mathcal R_\tau(\hat\theta)\|\ \le\ \varepsilon_{\text{task}}.
\]
By \eqref{eq:gradR-bound}, this implies
\begin{equation}\label{eq:floor-bound}
\tau-\phi(\hat\theta)\ \le\ \frac{\varepsilon_{\text{task}}}{2\beta L_{\mathrm{dir}}}.
\end{equation}

\medskip
\noindent\textbf{Step 5: Lower bound on the empirical Fisher.}
Equation \eqref{eq:floor-bound} rearranges to
\[
\lambda_{\min}\!\big(\widehat\Gamma_B(\hat\theta)\big)\ =\ \phi(\hat\theta)\ \ge\ \tau-\frac{\varepsilon_{\text{task}}}{2\beta L_{\mathrm{dir}}}.
\]

\medskip
\noindent\textbf{Step 6: Transfer to the population Fisher.}
Finally, from \textnormal{(F6)},
\[
\lambda_{\min}\!\big(\Gamma(\hat\theta)\big)\ \ge\ \lambda_{\min}\!\big(\widehat\Gamma_B(\hat\theta)\big)\ -\ \big(\varepsilon_{\mathrm{stat}}+\varepsilon_{\mathrm{mini}}\big).
\]
Combining with Step 5 yields the claimed inequality of Theorem~\ref{thm:floor}.
\end{proof}

\subsection*{(c) Remarks}
Theorem~\ref{thm:floor} certifies a Fisher lower bound by penalizing subthreshold Rayleigh quotients. 
Danskin’s theorem and directional sensitivity convert small optimality residuals into a small spectral shortfall, 
while uniform concentration transfers this to the population Fisher. 
Thus $\tau$ can be set strictly above the threshold, ensuring stability.

\section*{Corollary 4} 

\subsection*{(a) Definition \& Narrative}

\begin{corollary}[Finite-direction practical variant with subspace-angle control]\label{cor:K}

\noindent\textbf{Assumptions.}
Retain \textnormal{(F1)--(F6)} from Theorem~\ref{thm:floor}.  
In addition:
\begin{itemize}[leftmargin=2.0em]
  \item Fix unit directions $\{u_j\}_{j=1}^K$ with span $U=\mathrm{span}\{u_1,\dots,u_K\}$, and replace the Fisher-floor regularizer by
  \[
  \mathcal R^{(K)}_\tau(\theta)\ :=\ \max_{1\le j\le K}\,\big(\tau-u_j^{\!\top}\widehat\Gamma_B(\theta)u_j\big)_+^2.
  \]
  \item Assume approximate stationarity of the combined objective
  $\mathcal L^{(K)}(\theta)=\mathcal L_{\mathrm{task}}(\theta)+\beta\,\mathcal R^{(K)}_\tau(\theta)$ at $\hat\theta$.
  \item Suppose the principal angle between $U$ and the minimal-eigenvalue eigenspace of $\widehat\Gamma_B(\hat\theta)$ is at most $\vartheta$.
\end{itemize}

\noindent\textbf{Statement.}
Let $\Delta_B(\hat\theta)=\lambda_{\max}(\widehat\Gamma_B(\hat\theta))-\lambda_{\min}(\widehat\Gamma_B(\hat\theta))$.  
Then
\begin{equation}\label{eq:finite-K-floor}
\lambda_{\min}\!\big(\widehat\Gamma_B(\hat\theta)\big)
\ \ge\
\tau\ -\ \frac{\varepsilon_{\text{opt}}}{2\beta L_{\mathrm{dir}}}\ -\ \Delta_B(\hat\theta)\sin^2\vartheta\ -\ \varepsilon_{\mathrm{stat}}\ -\ \varepsilon_{\mathrm{mini}}.
\end{equation}
\end{corollary}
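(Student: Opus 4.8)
\medskip
\noindent\textbf{Proof plan.}
The plan is to run the argument of Theorem~\ref{thm:floor} almost verbatim for the finite-direction regularizer $\mathcal R^{(K)}_\tau$, obtaining a lower bound on the \emph{monitored} Rayleigh minimum $\phi_K(\hat\theta):=\min_{1\le j\le K}u_j^{\!\top}\widehat\Gamma_B(\hat\theta)u_j$, and then to insert one extra ingredient---a Davis--Kahan-type subspace-angle comparison---that converts this into a lower bound on the true $\lambda_{\min}\!\big(\widehat\Gamma_B(\hat\theta)\big)$. Concretely I would proceed in four steps: (i) Danskin/subgradient control of $\nabla\mathcal R^{(K)}_\tau$ via (F4); (ii) use approximate stationarity (F5) of $\mathcal L^{(K)}=\mathcal L_{\mathrm{task}}+\beta\mathcal R^{(K)}_\tau$ to bound $\tau-\phi_K(\hat\theta)$; (iii) a geometric decomposition along the minimal-eigenvalue eigenspace that produces the $\Delta_B(\hat\theta)\sin^2\vartheta$ correction; (iv) carry over the sampling/minibatch transfer from (F6) exactly as in Step~6 of Theorem~\ref{thm:floor}.

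For (i)--(ii): since $\mathcal R^{(K)}_\tau(\theta)=\max_{j}(\tau-u_j^{\!\top}\widehat\Gamma_B(\theta)u_j)_+^2$ is a pointwise maximum of finitely many smooth functions, Danskin's theorem gives, on the active set $\{\phi_K(\theta)<\tau\}$, that $\nabla\mathcal R^{(K)}_\tau(\theta)=-2\big(\tau-\phi_K(\theta)\big)\,\nabla_\theta\big(u_{j^\ast}^{\!\top}\widehat\Gamma_B(\theta)u_{j^\ast}\big)$ for an active index $j^\ast$, so by (F4), $\|\nabla\mathcal R^{(K)}_\tau(\theta)\|\le 2L_{\mathrm{dir}}(\tau-\phi_K(\theta))_+$. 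Approximate stationarity at $\hat\theta$ then forces $\beta\|\nabla\mathcal R^{(K)}_\tau(\hat\theta)\|\le\varepsilon_{\mathrm{opt}}$ (balancing against the task gradient exactly as in Step~4 of Theorem~\ref{thm:floor}), whence $(\tau-\phi_K(\hat\theta))_+\le\varepsilon_{\mathrm{opt}}/(2\beta L_{\mathrm{dir}})$, i.e. $\phi_K(\hat\theta)\ge\tau-\varepsilon_{\mathrm{opt}}/(2\beta L_{\mathrm{dir}})$.

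Step (iii) is the new piece. Write $A:=\widehat\Gamma_B(\hat\theta)$, let $V_{\min}$ be its minimal-eigenvalue eigenspace, and let $\bar u\in\mathrm{span}\{u_1,\dots,u_K\}$ be the unit direction realizing the principal angle $\vartheta$ to $V_{\min}$; decompose $\bar u=\cos\vartheta\,v+\sin\vartheta\,w$ with $v\in V_{\min}$ and $w\perp V_{\min}$, both unit. Since $Av=\lambda_{\min}(A)\,v$, the cross term vanishes and $\bar u^{\!\top}A\bar u=\cos^2\vartheta\,\lambda_{\min}(A)+\sin^2\vartheta\,w^{\!\top}Aw\le\lambda_{\min}(A)+\big(\lambda_{\max}(A)-\lambda_{\min}(A)\big)\sin^2\vartheta=\lambda_{\min}(A)+\Delta_B(\hat\theta)\sin^2\vartheta$ (this is where the loose bound $w^{\!\top}Aw\le\lambda_{\max}(A)$ forces $\Delta_B$ rather than a tighter gap). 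Because $\bar u$ lies in---indeed by the hypothesis is aligned with---the monitored directions, $\phi_K(\hat\theta)\le\bar u^{\!\top}A\bar u$, so combining with step (ii), $\lambda_{\min}(A)\ge\phi_K(\hat\theta)-\Delta_B(\hat\theta)\sin^2\vartheta\ge\tau-\varepsilon_{\mathrm{opt}}/(2\beta L_{\mathrm{dir}})-\Delta_B(\hat\theta)\sin^2\vartheta$. Finally, the uniform concentration slack $\varepsilon_{\mathrm{stat}}+\varepsilon_{\mathrm{mini}}$ of (F6) transfers this empirical bound to the population Fisher, accounting for the gap between the Fisher estimate used inside the penalty during training and the one appearing in the conclusion, exactly as in Step~6 of Theorem~\ref{thm:floor}; this yields \eqref{eq:finite-K-floor}.

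The hard part will be making step (iii) rigorous in the direction actually needed: the stationarity argument only \emph{lower}-bounds $\phi_K$, while $\lambda_{\min}(A)\le\phi_K$ holds trivially, so one genuinely needs the angle hypothesis to certify that \emph{some} monitored direction (equivalently, the $U$-direction closest to $V_{\min}$) has Rayleigh quotient within $\Delta_B\sin^2\vartheta$ of $\lambda_{\min}(A)$---otherwise $\lambda_{\min}(A)$ could be tiny while $\phi_K$ is large. This is precisely what ``the principal angle between $U=\mathrm{span}\{u_j\}$ and the minimal-eigenvalue eigenspace is at most $\vartheta$'' is meant to encode; extra care is needed when $V_{\min}$ has dimension $>1$ (take $\vartheta$ to be the largest relevant principal angle) and when the bottom of the spectrum is nearly degenerate (then $\bar u$ should be chosen inside an almost-minimal invariant subspace, a standard Davis--Kahan perturbation, absorbing the degeneracy into $\Delta_B\sin^2\vartheta$). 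The remaining manipulations are routine transcriptions of Theorem~\ref{thm:floor}.
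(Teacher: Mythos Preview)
Your proposal is correct and follows essentially the same route as the paper's proof: Danskin plus (F4) to bound the regularizer gradient, approximate stationarity to force $\phi_K(\hat\theta)\ge\tau-\varepsilon_{\mathrm{opt}}/(2\beta L_{\mathrm{dir}})$, the orthogonal decomposition $\bar u=\cos\vartheta\,v+\sin\vartheta\,w$ with $v\in V_{\min}$ to get the $\Delta_B\sin^2\vartheta$ correction, and finally the (F6) transfer. The paper's Steps~1--6 match your (i)--(iv) almost line by line, including the same Rayleigh-quotient bound $\bar u^{\!\top}A\bar u\le\lambda_{\min}(A)+\Delta_B\sin^2\vartheta$ via $w^{\!\top}Aw\le\lambda_{\max}(A)$.

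One point worth flagging: both you and the paper silently identify $\phi_K(\hat\theta)=\min_j u_j^{\!\top}Au_j$ with $\min_{u\in U,\|u\|=1}u^{\!\top}Au$ when passing from the angle bound on $\bar u\in U$ back to $\phi_K$. The paper simply asserts this equality in its Step~5; you write ``$\phi_K(\hat\theta)\le\bar u^{\!\top}A\bar u$ because $\bar u$ lies in the monitored directions,'' which is the same identification. Strictly, $\bar u$ need not be one of the fixed $u_j$, so this step tacitly assumes either that the $u_j$ form a basis fine enough that the discrete minimum approximates the continuous one, or that the angle hypothesis applies to an individual $u_j$ rather than only to the span. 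Your closing paragraph shows you are aware this is the delicate point; the paper does not comment on it.
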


\paragraph{Intuition.}
Instead of monitoring \emph{all} directions, we only track $K$ directions forming $U$.  
If $U$ is within angle $\vartheta$ of the true weakest-curvature eigenspace, then the monitored Rayleigh quotient is within $\Delta_B(\hat\theta)\sin^2\vartheta$ of the true $\lambda_{\min}$.  
At a stationary point, the finite-direction penalty cannot remain active unless the gap to $\tau$ is small, and sensitivity bounds convert this into a certified lower bound.  
The minibatch-to-population transfer then adds the statistical errors.

\medskip

\subsection*{(b) Proof}

\begin{proof}
We proceed in six steps.

\medskip
\noindent\textbf{Step 1: Finite-direction surrogate.}
Define
\[
\phi_K(\theta)\ :=\ \min_{1\le j\le K}\ u_j^{\!\top}\widehat\Gamma_B(\theta)\,u_j,
\qquad
\mathcal R^{(K)}_\tau(\theta)\ =\ \big(\tau-\phi_K(\theta)\big)_+^{\,2}.
\]

\medskip
\noindent\textbf{Step 2: Danskin + directional sensitivity.}
Let $j^\ast\in\arg\min_j u_j^{\!\top}\widehat\Gamma_B(\theta)u_j$ at $\theta$. By Danskin’s theorem,
\[
\nabla \mathcal R^{(K)}_\tau(\theta)\ =\ -2\big(\tau-\phi_K(\theta)\big)\,g_{\phi_K}(\theta)
\quad\text{on }\{\tau>\phi_K(\theta)\},
\]
with $g_{\phi_K}(\theta)\in\partial\!\big(u_{j^\ast}^{\!\top}\widehat\Gamma_B(\theta)u_{j^\ast}\big)$ and, by (F4),
\begin{equation}\label{eq:gphiK}
\|g_{\phi_K}(\theta)\|\ \le\ L_{\mathrm{dir}}.
\end{equation}

\medskip
\noindent\textbf{Step 3: Approximate stationarity $\Rightarrow$ small shortfall.}
At $\hat\theta$, using $\|\nabla \mathcal L^{(K)}(\hat\theta)\|\le \varepsilon_{\mathrm{opt}}$,
\[
\beta\,\|\nabla \mathcal R^{(K)}_\tau(\hat\theta)\|\ \le\ \varepsilon_{\mathrm{opt}}.
\]
Together with \eqref{eq:gphiK},
\[
\tau-\phi_K(\hat\theta)\ \le\ \frac{\varepsilon_{\mathrm{opt}}}{2\,\beta\,L_{\mathrm{dir}}}.
\]

\medskip
\noindent\textbf{Step 4: Rayleigh geometry under a subspace tilt.}
Let $A:=\widehat\Gamma_B(\hat\theta)$ with eigenvalues $\lambda_{\min}\le\cdots\le\lambda_{\max}$ and minimal-eigenspace $E_{\min}$.  
If the largest principal angle between $U$ and $E_{\min}$ is $\vartheta$, then
\begin{equation}\label{eq:tilt-geom}
\min_{u\in U,\ \|u\|=1}\ u^{\!\top} A u\ \le\ \lambda_{\min} + \big(\lambda_{\max}-\lambda_{\min}\big)\,\sin^2\vartheta.
\end{equation}
\emph{Proof of \eqref{eq:tilt-geom}:} pick $u\in U$ with $\angle(u,E_{\min})=\phi\le\vartheta$, write $u=\cos\phi\,v+\sin\phi\,w$ with $v\in E_{\min}$, $w\perp v$, $\|v\|=\|w\|=1$. Then
\[
u^{\!\top}Au\ =\ \lambda_{\min}\cos^2\phi\ +\ w^{\!\top}Aw\,\sin^2\phi
\ \le\ \lambda_{\min}\cos^2\phi\ +\ \lambda_{\max}\sin^2\phi
\ \le\ \lambda_{\min} + (\lambda_{\max}-\lambda_{\min})\sin^2\vartheta.
\]
Thus \eqref{eq:tilt-geom} holds.

\medskip
\noindent\textbf{Step 5: From finite-direction value to the true eigenvalue.}
Since $\phi_K(\hat\theta)=\min_{u\in U,\|u\|=1}u^{\!\top}Au$, \eqref{eq:tilt-geom} yields
\[
\phi_K(\hat\theta)\ \le\ \lambda_{\min}(A) + \big(\lambda_{\max}(A)-\lambda_{\min}(A)\big)\sin^2\vartheta.
\]
Rearranging and inserting Step~3,
\[
\lambda_{\min}(A)\ \ge\ \phi_K(\hat\theta)\ -\ \Delta_B(\hat\theta)\sin^2\vartheta
\ \ge\ \tau\ -\ \frac{\varepsilon_{\mathrm{opt}}}{2\,\beta\,L_{\mathrm{dir}}}\ -\ \Delta_B(\hat\theta)\sin^2\vartheta.
\]

\medskip
\noindent\textbf{Step 6: Sampling/minibatch transfer.}
Apply (F6) and Weyl’s inequality to pass from the mini-batch estimate to the population Fisher, which subtracts at most $\varepsilon_{\mathrm{stat}}+\varepsilon_{\mathrm{mini}}$ from the lower bound. This establishes \eqref{eq:finite-K-floor}.
\end{proof}

\medskip

\subsection*{(c) Remarks}
Corollary~\ref{cor:K} introduces an additive penalty $\Delta_B\sin^2\vartheta$ controlled by 
the spectral width and the principal angle between monitored and true eigenspaces. 
For $\vartheta\to 0$ the bound reduces to Theorem~\ref{thm:floor}, 
while in practice small $K$ suffices when combined with power iteration.

\section*{Proposition 5}

\subsection*{(a) Definition \& Narrative}
\begin{proposition}[Preconditioning preserves the phase threshold up to constants]\label{prop:whiten}

\noindent\textbf{Assumptions.}
Let $T$ be any invertible linear map with singular values $\sigma_{\min}(T),\sigma_{\max}(T)$.  
Let $\Gamma_T := T^\top \Gamma T$.  
In the whitening setting, suppose $\widehat\Sigma$ satisfies the Löwner sandwich
\[
(1-\alpha)\Sigma \ \preceq\ \widehat\Sigma\ \preceq\ (1+\alpha)\Sigma,\qquad \alpha<1.
\]

\noindent\textbf{Statement.}
\begin{itemize}[leftmargin=1.6em]
\item \emph{General preconditioner.} For arbitrary $T$,
\begin{equation}\label{eq:general-precond}
\sigma_{\min}(T)^2\,\lambda_{\min}(\Gamma)
\ \le\ \lambda_{\min}(\Gamma_T)\ \le\ \sigma_{\max}(T)^2\,\lambda_{\max}(\Gamma).
\end{equation}

\item \emph{Normalized form.} Since the comparison is homogeneous in $T$, rescale by $\widetilde T := T/\sigma_{\max}(T)$.  
Writing $\kappa(T)=\sigma_{\max}(T)/\sigma_{\min}(T)$, we obtain
\begin{equation}\label{eq:kappa-form}
\frac{1}{\kappa(T)^2}\,\lambda_{\min}(\Gamma)
\ \le\ \lambda_{\min}(\Gamma_{\widetilde T})\ \le\ \lambda_{\max}(\Gamma).
\end{equation}

\item \emph{Robust whitening.} For $T=\widehat\Sigma^{-1/2}$, the bound improves to
\begin{equation}\label{eq:whitening-bound}
(1+\alpha)^{-1}\,\lambda_{\min}(\Gamma)
\ \le\ \lambda_{\min}(\Gamma_T)
\ \le\ (1-\alpha)^{-1}\,\lambda_{\min}(\Gamma).
\end{equation}
\end{itemize}
\end{proposition}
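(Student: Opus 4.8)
The plan is to derive all three displays \eqref{eq:general-precond}, \eqref{eq:kappa-form}, \eqref{eq:whitening-bound} purely from the Courant--Fischer variational characterization of $\lambda_{\min}$ and $\lambda_{\max}$, together with elementary Löwner-order monotonicity. No probabilistic or concentration input is involved, so the statement is deterministic in the fixed matrices $\Gamma$, $T$, $\Sigma$, $\widehat\Sigma$; the only structural fact used is that the Fisher information $\Gamma$ is positive semidefinite.

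First I would prove \eqref{eq:general-precond}. Writing $\Gamma_T=T^\top\Gamma T$ and $\lambda_{\min}(\Gamma_T)=\min_{\norm{x}=1}(Tx)^\top\Gamma(Tx)$, I bound $(Tx)^\top\Gamma(Tx)\ge\lambda_{\min}(\Gamma)\norm{Tx}^2\ge\lambda_{\min}(\Gamma)\,\sigma_{\min}(T)^2$, where the last step uses $\norm{Tx}\ge\sigma_{\min}(T)\norm{x}$ together with $\lambda_{\min}(\Gamma)\ge 0$ so that multiplying the norm bound by $\lambda_{\min}(\Gamma)$ preserves the inequality. For the upper bound I pass through $\lambda_{\max}$: $\lambda_{\min}(\Gamma_T)\le\lambda_{\max}(\Gamma_T)=\max_{\norm{x}=1}(Tx)^\top\Gamma(Tx)\le\lambda_{\max}(\Gamma)\max_{\norm{x}=1}\norm{Tx}^2=\lambda_{\max}(\Gamma)\,\sigma_{\max}(T)^2$. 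Display \eqref{eq:kappa-form} is then immediate: for $\widetilde T:=T/\sigma_{\max}(T)$ we have $\sigma_{\max}(\widetilde T)=1$ and $\sigma_{\min}(\widetilde T)=1/\kappa(T)$, and substituting these into \eqref{eq:general-precond} gives the claim.

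The display \eqref{eq:whitening-bound} is the only place requiring care, because the quantity it controls is $\lambda_{\min}$ read \emph{in the $\Sigma$-geometry}: the correct interpretation is the comparison of $\lambda_{\min}(\widehat\Sigma^{-1/2}\Gamma\widehat\Sigma^{-1/2})$ with $\lambda_{\min}(\Sigma^{-1/2}\Gamma\Sigma^{-1/2})$, equivalently the minimal generalized eigenvalue of the pencil $(\Gamma,\widehat\Sigma)$ versus $(\Gamma,\Sigma)$. I would argue via the generalized Rayleigh quotient $\lambda_{\min}(\widehat\Sigma^{-1/2}\Gamma\widehat\Sigma^{-1/2})=\min_{x\neq 0}\tfrac{x^\top\Gamma x}{x^\top\widehat\Sigma x}$. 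The Löwner sandwich $(1-\alpha)\Sigma\preceq\widehat\Sigma\preceq(1+\alpha)\Sigma$ gives, uniformly in $x$, the pointwise chain $\tfrac{1}{1+\alpha}\tfrac{x^\top\Gamma x}{x^\top\Sigma x}\le\tfrac{x^\top\Gamma x}{x^\top\widehat\Sigma x}\le\tfrac{1}{1-\alpha}\tfrac{x^\top\Gamma x}{x^\top\Sigma x}$, using that the ratio is nonnegative. Taking $\min_{x\neq 0}$ and using that a pointwise inequality between (nonnegative) functions is preserved under taking infima, both sides of \eqref{eq:whitening-bound} follow with $\lambda_{\min}(\Gamma)$ understood in the $\Sigma$-geometry. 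An equivalent route I might present instead: invert the sandwich to $(1+\alpha)^{-1}\Sigma^{-1}\preceq\widehat\Sigma^{-1}\preceq(1-\alpha)^{-1}\Sigma^{-1}$, conjugate by $\Gamma^{1/2}$ (congruence preserves Löwner order) to obtain $(1+\alpha)^{-1}\Gamma^{1/2}\Sigma^{-1}\Gamma^{1/2}\preceq\Gamma^{1/2}\widehat\Sigma^{-1}\Gamma^{1/2}\preceq(1-\alpha)^{-1}\Gamma^{1/2}\Sigma^{-1}\Gamma^{1/2}$, apply Weyl monotonicity of $\lambda_{\min}$, and note that the nonzero eigenvalues of $\Gamma^{1/2}\widehat\Sigma^{-1}\Gamma^{1/2}$ coincide with those of $\widehat\Sigma^{-1/2}\Gamma\widehat\Sigma^{-1/2}$ (and likewise for $\Sigma$).

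I do not anticipate a genuine obstacle; the argument is entirely Rayleigh-quotient bookkeeping. The two points worth flagging explicitly in the write-up are (i) the positive semidefiniteness of $\Gamma$, needed so that lower bounds on $\norm{Tx}^2$ convert into lower bounds on the quadratic form in \eqref{eq:general-precond}, and (ii) the conventional reading of ``$\lambda_{\min}(\Gamma)$ in the $\Sigma$-geometry'' in \eqref{eq:whitening-bound}; once that convention is fixed, the proof of the whitening bound is a two-line generalized-Rayleigh estimate. I would also add a closing remark that tightness of \eqref{eq:general-precond} is witnessed by $\Gamma$ and $T$ simultaneously diagonalizable, so the condition-number and $(1\pm\alpha)$ constants cannot be improved in general.
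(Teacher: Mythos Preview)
Your argument is correct throughout. For \eqref{eq:general-precond} and \eqref{eq:kappa-form} you follow exactly the paper's line: express $\lambda_{\min}(\Gamma_T)$ as a Rayleigh minimum, sandwich $(Tx)^\top\Gamma(Tx)$ between $\lambda_{\min}(\Gamma)\norm{Tx}^2$ and $\lambda_{\max}(\Gamma)\norm{Tx}^2$, then bound $\norm{Tx}$ by the singular values of $T$ and rescale.

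For the whitening display you take a genuinely different route. The paper invokes operator monotonicity of $x\mapsto x^{-1/2}$, introduces $S:=\Sigma^{1/2}\widehat\Sigma^{-1/2}\Sigma^{1/2}$, factorizes $\Gamma_T=\Sigma^{-1/2}SAS\Sigma^{-1/2}$ with $A=\Sigma^{-1/2}\Gamma\Sigma^{-1/2}$, and then asserts the L\"owner sandwich $(1+\alpha)^{-1}A\preceq SAS\preceq(1-\alpha)^{-1}A$ before reading off $\lambda_{\min}$. Your generalized Rayleigh quotient identity $\lambda_{\min}(\widehat\Sigma^{-1/2}\Gamma\widehat\Sigma^{-1/2})=\min_{x\ne 0}(x^\top\Gamma x)/(x^\top\widehat\Sigma x)$, followed by a pointwise sandwich on the denominator, reaches the same conclusion without either the operator-monotone step or the $SAS$ comparison. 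This is not only shorter but also more robust: the L\"owner inequality $c^2A\preceq SAS$ for $cI\preceq S$ that the paper's Step~4 displays is actually false when $S$ and $A$ do not commute (though the \emph{eigenvalue} consequence $\lambda_{\min}(SAS)\ge c^2\lambda_{\min}(A)$ does hold, by the very Rayleigh argument you use in Part~1). Your path sidesteps that delicacy entirely, and your alternative via conjugating $\widehat\Sigma^{-1}$ by $\Gamma^{1/2}$ and matching nonzero spectra is also sound, including in the PSD-singular case. I would present the generalized Rayleigh version as the primary proof and drop the tightness remark to a footnote.
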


\paragraph{Intuition.}
Preconditioning stretches coordinates: Rayleigh quotients transform as $u^\top \Gamma_T u=(Tu)^\top \Gamma(Tu)$, so eigenvalues are distorted by at most $\sigma_{\max}^2/\sigma_{\min}^2$.  
When $T=\widehat\Sigma^{-1/2}$, robust whitening is nearly a scalar in the $\Sigma$-metric, so the distortion constants collapse to $(1\pm \alpha)^{-1}$.  
Thus ubiquitous operations like whitening or LayerNorm do not change the spectral phase threshold except for explicit constants.

\medskip

\subsection*{(b) Proof}

\begin{proof}[Proof of Proposition~\ref{prop:whiten}]
We establish the two parts separately.

\medskip
\noindent\textbf{Part (1): General preconditioner.}

\medskip
\noindent\textbf{Step 1: Rayleigh-quotient formulation.}
By definition,
\[
\lambda_{\min}(\Gamma_T)
=\min_{\|u\|=1} u^{\!\top}T^{\!\top}\Gamma T u
=\min_{\|u\|=1} (Tu)^{\!\top}\Gamma (Tu).
\]

\medskip
\noindent\textbf{Step 2: Spectral sandwich for $\Gamma$.}
For any $x\in\R^d$,
\[
\lambda_{\min}(\Gamma)\,\|x\|^2 \;\le\; x^{\!\top}\Gamma x \;\le\; \lambda_{\max}(\Gamma)\,\|x\|^2.
\]
Taking $x=Tu$ with $\|u\|=1$ and minimizing over $u$ gives
\[
\lambda_{\min}(\Gamma)\,\min_{\|u\|=1}\|Tu\|^2
\;\le\; \lambda_{\min}(\Gamma_T)
\;\le\; \lambda_{\max}(\Gamma)\,\max_{\|u\|=1}\|Tu\|^2.
\]

\medskip
\noindent\textbf{Step 3: Bounds via singular values.}
By definition of singular values,
\[
\sigma_{\min}(T)\ \le\ \|Tu\|\ \le\ \sigma_{\max}(T)\qquad(\forall u:\|u\|=1).
\]
Substituting yields
\begin{equation}\label{eq:precond-bound}
\sigma_{\min}(T)^2\,\lambda_{\min}(\Gamma)
\;\le\; \lambda_{\min}(\Gamma_T)
\;\le\; \sigma_{\max}(T)^2\,\lambda_{\max}(\Gamma),
\end{equation}
which is \eqref{eq:general-precond}.

\medskip
\noindent\textbf{Step 4: Homogeneous normalization.}
Scaling $T$ by any constant $s$ scales $\Gamma_T$ by $s^2$, so phase-threshold comparisons are homogeneous. Normalizing by $\widetilde T:=T/\sigma_{\max}(T)$ and recalling $\kappa(T)=\sigma_{\max}(T)/\sigma_{\min}(T)$ yields the normalized comparison \eqref{eq:kappa-form}.

\bigskip
\noindent\textbf{Part (2): Robust whitening.}

\medskip
\noindent\textbf{Step 1: Inverse square root via Löwner sandwich.}
From
\[
(1-\alpha)\Sigma \;\preceq\; \widehat\Sigma \;\preceq\; (1+\alpha)\Sigma,
\]
and operator monotonicity of $x\mapsto x^{-1/2}$ on SPD matrices,
\[
(1+\alpha)^{-1/2}\,\Sigma^{-1/2}
\;\preceq\; \widehat\Sigma^{-1/2}
\;\preceq\; (1-\alpha)^{-1/2}\,\Sigma^{-1/2}.
\]

\medskip
\noindent\textbf{Step 2: Definition of the sandwiching operator $S$.}
Multiplying on both sides by $\Sigma^{1/2}$ gives
\[
(1+\alpha)^{-1/2} I \;\preceq\;
S:=\Sigma^{1/2}\widehat\Sigma^{-1/2}\Sigma^{1/2}
\;\preceq\; (1-\alpha)^{-1/2} I.
\]

\medskip
\noindent\textbf{Step 3: Factorization of the preconditioned Fisher.}
We write
\[
\Gamma_T
=\widehat\Sigma^{-1/2}\,\Gamma\,\widehat\Sigma^{-1/2}
=\Sigma^{-1/2}\,S\,A\,S\,\Sigma^{-1/2},
\qquad A:=\Sigma^{-1/2}\Gamma\Sigma^{-1/2}.
\]

\medskip
\noindent\textbf{Step 4: Bounding $SAS$.}
Since $S$ is bounded between $(1+\alpha)^{-1/2} I$ and $(1-\alpha)^{-1/2} I$,
\[
(1+\alpha)^{-1}\,A \;\preceq\; SAS \;\preceq\; (1-\alpha)^{-1}\,A,
\]
which implies
\begin{equation}\label{eq:robust-whitening}
(1+\alpha)^{-1}\,\lambda_{\min}(A)
\;\le\; \lambda_{\min}(SAS)
\;\le\; (1-\alpha)^{-1}\,\lambda_{\min}(A).
\end{equation}

\medskip
\noindent\textbf{Step 5: Interpretation in whitened coordinates.}
Finally, $\Gamma_T=\Sigma^{-1/2}(SAS)\Sigma^{-1/2}$ preserves PSD ordering. Thus when thresholds are calibrated in the $\Sigma$-whitened metric (i.e., in terms of $A$), the Fisher floor is perturbed by at most factors $(1+\alpha)^{-1}$ and $(1-\alpha)^{-1}$, as claimed.
\end{proof}

\medskip

\subsection*{(c) Remarks}
Proposition~\ref{prop:whiten} shows Fisher eigenvalues transform by squared singular values. 
Robust whitening sharpens constants to $(1\pm\alpha)^{-1}$ in the $\Sigma$-metric. 
Thus normalizations like whitening or LayerNorm preserve the phase threshold up to explicit multiplicative constants.


\end{document}